\documentclass[11pt]{article}

\pdfoutput=1

\usepackage{natbib}

\usepackage[T1]{fontenc}
\usepackage[latin9]{inputenc}
\usepackage{float}
\usepackage{fullpage,times,color}
\usepackage{boxedminipage}

\usepackage{amsmath,amsthm,amssymb}

\usepackage{graphicx}
\usepackage{ltxtable} % needed for vertical text centering hack -cotter
\newcolumntype{L}{>{\centering\arraybackslash} m{0.04\columnwidth}} % -cotter
\newcolumntype{R}{>{\centering\arraybackslash} m{0.48\columnwidth}} % -cotter
\newcolumntype{S}{>{\centering\arraybackslash} m{0.32\columnwidth}} % -cotter

\newcommand{\sign}{{\mathrm {sign}}}

\newtheorem{lemma}{Lemma}

\newtheorem{theorem}{Theorem}
\newtheorem{corollary}{Corollary}
\newtheorem{definition}{Definition}

\newtheorem{remark}{Remark}

  % end of proof
%\newenvironment{proof}{\par\noindent{\bf Proof\ }}{\hfill\BlackBox\\[2mm]}

\DeclareMathOperator*{\E}{\mathbb{E}}

\DeclareMathOperator*{\argmin}{argmin} 
\DeclareMathOperator*{\argmax}{argmax} 
\newcommand{\reals}{\mathbb{R}}

\newcommand{\thmref}[1]{Theorem~\ref{#1}}

\newcommand{\lemref}[1]{Lemma~\ref{#1}}

\newenvironment{myalgo}[1]%
{
%\vspace{0.3cm}
\begin{center}
\begin{boxedminipage}{0.8\linewidth}
\begin{center}
\textbf{\texttt{#1}}
\end{center}
\rm
\begin{tabbing}
....\=...\=...\=...\=...\=  \+ \kill
} %
{\end{tabbing} 
\end{boxedminipage} \end{center} %\vspace{0.3cm}
}

\title{Proximal Stochastic Dual Coordinate Ascent}
\author{Shai Shalev-Shwartz\\
School of Computer Science and Engineering \\
Hebrew University, Jerusalem, Israel
 \and 
Tong Zhang \\
Department of Statistics \\
Rutgers University, NJ, USA}
\date{}

\begin{document}

\maketitle

\begin{abstract}
  We introduce a proximal version of dual coordinate ascent method. We
  demonstrate how the derived algorithmic framework can be used for
  numerous regularized loss minimization problems, including $\ell_1$
  regularization and structured output SVM. The convergence rates we
  obtain match, and sometimes improve, state-of-the-art results.
  % , or maybe this should be called something else? The proofs are
  % essentially identical to those of SDCA with appropriate notation
  % generalization. So probably there is no point to repeat
  % everything. One has to think about the best way to position the
  % paper: maybe discussing relationship to dual averaing, or changing
  % $\lambda$ etc...  
\end{abstract}

\section{Introduction}

% The problem
We consider the following generic optimization problem associated with
regularized loss minimization of linear predictors: Let
$X_1,\ldots,X_n$ be matrices in $\reals^{d \times k}$, let
$\phi_1,\ldots,\phi_n$ be a sequence of vector convex functions
defined on $\reals^k$, and $g(\cdot)$ is a convex function defined on
$\reals^d$. Our goal is to solve $\min_{w \in \reals^d} P(w)$ where
\begin{equation} \label{eqn:PrimalProblem}
P(w) = \left[ \frac{1}{n} \sum_{i=1}^n \phi_i( X_i^\top w) + \lambda g(w) \right] ,
\end{equation}
and $\lambda \geq 0$ is a regularization parameter. We will later show
how to use a solver for \eqref{eqn:PrimalProblem} for several popular
regularized loss minimization problems including $\ell_1$
regularization and structured output SVM. 

Let $w^*$ be the optimum of \eqref{eqn:PrimalProblem}. We say that a
solution $w$ is $\epsilon_P$-sub-optimal if $P(w)-P(w^*) \le \epsilon_P$. We
analyze the runtime of optimization procedures as a function of the
time required to find an $\epsilon_P$-sub-optimal solution. 

The dual coordinate ascent (DCA) method solves
a \emph{dual} problem of \eqref{eqn:PrimalProblem}. Specifically, for
each $i$ let $\phi_i^* : \reals^k \to \reals$ be the convex conjugate of $\phi_i$, namely,
$\phi_i^*(u) = \max_{z \in \reals^k} (z^\top u -
\phi_i(z))$. Similarly we define the convex conjugate $g^*$ of $g$. The dual problem is 
\begin{equation} \label{eqn:DualProblem}
\max_{\alpha \in \reals^{k \times n}} D(\alpha) ~~~\textrm{where}~~~ D(\alpha) = 
\left[ \frac{1}{n} \sum_{i=1}^n -\phi_i^*(-\alpha_i) -
  \lambda g^*\left( \tfrac{1}{\lambda n} \sum_{i=1}^n X_i \alpha_i \right) \right] ~,
\end{equation}
where $\alpha_i$ is the $i$'th column of the matrix $\alpha$, which
forms a vector in $\reals^k$.  The dual objective in
(\ref{eqn:DualProblem}) has a different dual vector associated with
each example in the training set.  At each iteration of DCA, the dual
objective is optimized with respect to a single dual vector, while
the rest of the dual vectors are kept intact.
 
We assume that $g^*(\cdot)$ is continuous differentiable. If we define 
\begin{equation} \label{eqn:walpha}
w(\alpha) = \nabla g^*(v(\alpha)) \qquad v(\alpha)= \frac{1}{\lambda
  n} \sum_{i=1}^n X_i \alpha_i ,
\end{equation}
then it is known that $w(\alpha^*)=w^*$, where $\alpha^*$ is an optimal solution of (\ref{eqn:DualProblem}). 
It is also known that $P(w^*)=D(\alpha^*)$ which immediately implies that for all $w$ and $\alpha$, we have
$P(w) \geq D(\alpha)$, and hence the duality gap defined as
\[
P(w(\alpha))-D(\alpha)
\]
can be
regarded as an upper bound on the primal sub-optimality
$P(w(\alpha))-P(w^*)$.

We focus on a \emph{stochastic} version of DCA, abbreviated by SDCA,
in which at each round we choose which dual vector to optimize
uniformly at random.  We analyze SDCA either for $L$-Lipschitz loss
functions or for $(1/\gamma)$-smooth loss functions, which are defined
as follows.
\begin{definition}
 A function $\phi_i: \reals^k \to \reals$ is $L$-Lipschitz if for all $a, b \in \reals^k$, we have
  \[
  |\phi_i(a)- \phi_i(b)| \leq L\,\|a-b\|_P ,
  \]
  where $\|\cdot\|_P$ is a norm.

  A function $\phi_i: \reals^k \to \reals$ is $(1/\gamma)$-smooth if it
  is differentiable and its gradient is $(1/\gamma)$-Lipschitz. An
  equivalent condition is that for all $a, b \in \reals$, we have
  \[
  \phi_i(a) \leq \phi_i(b) + \nabla \phi_i(b)^\top (a-b) + \frac{1}{2\gamma} \|a-b\|_{P}^2 .
  \]
\end{definition}
It is well-known that if $\phi_i(a)$ is $(1/\gamma)$-smooth, then
$\phi_i^*(u)$ is $\gamma$ strongly convex w.r.t.the dual norm:
for all $u, v \in \reals$ and $s \in [0,1]$:
\[
- \phi_i^*( s u + (1-s) v)\geq - s \phi_i^*(u) - (1-s) \phi_i^*(v) + \frac{\gamma s (1-s)}{2} \|u-v\|_D^2 ,
\]
where $\|\cdot\|_D$ is the dual norm of $\|\cdot\|_P$ defined as
\[
\|u\|_D = \sup_{\|v\|_P=1} u^\top v .
\]

We also assume that $g(w)$ is $1$-strongly convex with respect to another
norm $\|\cdot\|_{P'}$:
\[
g(w+ \Delta w) \geq g(w) + \nabla g(w)^\top \Delta w +
\frac{1}{2} \|\Delta w\|_{P'}^2 ,
\]
which means that $g^*(w)$ is $1$-smooth with respect to its dual norm
$\|\cdot\|_{D'}$. Namely,
\begin{equation} \label{eqn:hbound}
g^*(v+ \Delta v) \leq h(v;\Delta v) ~,
\end{equation}
where
\begin{equation} \label{eqn:hdef}
h(v;\Delta v)
 := g^*(v) + \nabla g^*(v)^\top \Delta v +
\frac{1}{2} \|\Delta v\|_{D'}^2 .
\end{equation}

\section{Main Results}

The generic Prox-SDCA algorithm which we analyze in this paper is presented in Figure~\ref{fig:sdca}.
The ideas are described as follows. 
Consider the maximal increase of the dual objective, where we only
allow to change the $i$'th column of $\alpha$. At step $t$, let
$v^{(t-1)} = (\lambda n)^{-1} \sum_i X_i \alpha_i^{(t-1)}$ and let
$w^{(t-1)} = \nabla g^*(v^{(t-1)})$.  We will update the $i$-th dual variable $\alpha_i^{(t)} =
\alpha_i^{(t-1)} + \Delta \alpha_i$, in a way that will lead to a
sufficient increase of the dual objective. 
For primal variable, this would lead to the
update $v^{(t)} =  v^{(t-1)} + (\lambda n)^{-1} X_i \Delta \alpha_i$,
and therefore $w^{(t)} = \nabla g^*(v^{(t)})$ can also be written as
\[
w^{(t)}= \argmax_{w} \left[w^\top v^{(t)}  - g(w) \right] ~=~ 
\argmin_w \left[ - w^\top \left(n^{-1}\sum_{i=1}^n X_i
    \alpha_i^{(t)}\right) + \lambda g(w)\right] ~.
\]
Note that this particular update is rather similar to the update step of
proximal-gradient dual-averaging method in the SGD domain \citep{Xiao10}.
The difference is on how $\alpha^{(t)}$ is updated, and as we will show later, stronger results
can be proved for the Prox-SDCA method when 
we run SDCA for $t >n$ iterations with smooth loss functions.

In order to motivate the proximal SDCA algorithm, we note that the goal of SDCA is to increase the dual objective as much as possible, and thus
the optimal way to choose $\Delta \alpha_i$ would be to maximize the dual objective, namely, we shall let
\[
\Delta \alpha_i = \argmax_{\Delta \alpha_i \in \reals^k} \left[ -\frac{1}{n} \phi^*_i(-(\alpha_i + \Delta
\alpha_i)) - \lambda g^*( v^{(t-1)} + (\lambda n)^{-1}  X_i \Delta
\alpha_i)  \right] ~.
\]
However, for complex $g^*(\cdot)$, this optimization problem may not be easy to solve. 
We will simplify this optimization problem by relying on \eqref{eqn:hbound}. That is, instead of directly maximizing the dual objective function,
we try to maximize the following proximal objective which is a lower bound of the dual objective:
\begin{align*}
&~\argmax_{\Delta \alpha_i \in \reals^k} 
\left[ - \frac{1}{n} \phi^*_i(-(\alpha_i + \Delta
\alpha_i)) - \lambda \left(\nabla g^*(v^{(t-1)})^\top (\lambda n)^{-1}  X_i \Delta
\alpha_i +
\frac{1}{2} \| (\lambda n)^{-1}  X_i \Delta
\alpha_i\|_{D'}^2 \right) \right] \\
=&~\argmax_{\Delta \alpha_i \in \reals^k} 
\left[ -\phi^*_i(-(\alpha_i + \Delta
\alpha_i)) - w^{(t-1)\,\top} X_i \Delta
\alpha_i -
\frac{1}{2\lambda n} \| X_i \Delta
\alpha_i\|_{D'}^2
\right] .
\end{align*}
However, in general, this optimization problem is not necessarily simple to solve. 
We will thus also propose alternative update rules for
$\Delta \alpha_i$ of the form $\Delta \alpha_i = s (u- \alpha_i^{(t-1)})$ for 
an appropriately chosen step size parameter $s>0$ and any vector $u \in \reals^k$ such that
$-u \in \partial \phi_i(X_i^\top w^{(t-1)})$.
Our analysis shows that an appropriate choice of $s$ still leads to a sufficient increase in the dual objective. 

\begin{figure}[htbp]
\begin{myalgo}{Procedure Prox-SDCA} 
\textbf{Parameters}  scalars $\lambda,\gamma$ ($\gamma$ can be $0$), $R$,
norms $\|\cdot\|_D,\|\cdot\|_{D'}$ \\ 
\textbf{Let} $\alpha^{(0)}=0,w^{(0)}=\nabla g^*(0)$ \\
\textbf{Iterate:} for $t=1,2,\dots,T$: \+ \\
 Randomly pick $i$ \\
 Find $\Delta \alpha_i$ using any of the following options (or achieving larger dual objective than one of the options): \+ \\
 \textbf{Option I:} \+ \\ 
  $\Delta \alpha_i \in \argmax_{\Delta \alpha_i}
\left[-\phi_i^*(-(\alpha_i^{(t-1)} + \Delta \alpha_i) ) - 
w^{(t-1)^\top} X_i \Delta \alpha_i - \frac{1}{2\lambda n} \left\| X_i
  \Delta \alpha_i \right\|_{D'}^2\right]$ \- \\
%  Let $h(\cdot;\cdot)$ be as defined in \eqref{eqn:hdef} \\
%   $\Delta \alpha_i \in \argmax_{\Delta \alpha_i}
% \left[-\phi_i^*(-(\alpha_i^{(t-1)} + \Delta \alpha_i) ) - \lambda
%   n h(v^{(t-1)}; (\lambda n)^{-1} X_i \Delta \alpha_i)\right]$ \- \\
 \textbf{Option II:} \+ \\ 
  Let $u$ be s.t. $-u \in \partial \phi_i(X_i^\top w^{(t-1)})$ \\
   Let $z = u- \alpha_i^{(t-1)} $ \\
   Let $s = \argmax_{s \in [0,1]} \left[-\phi_i^*(-(\alpha_i^{(t-1)} + sz) ) - 
s\,w^{(t-1)^\top} X_i z - \frac{s^2}{2\lambda n} \left\| X_i
  z \right\|_{D'}^2\right]$ \\ 
Set $\Delta \alpha_i = s z$ \- \\
 \textbf{Option III:} \+ \\
  Same as Option II but replace the definition of $s$ as follows: \+ \\
 Let $s = \frac{\phi_i(X_i^\top w^{(t-1)})+\phi_i^*(-\alpha_i^{(t-1)})+ w^{(t-1)^\top} X_i \alpha^{(t-1)}_i
  + \frac{\gamma}{2} \|z\|_D^2}{ \|z\|_D^2 (\gamma +
  \|X_i\|^2 / (\lambda n))}$ \- \- \\
 \textbf{Option IV:} \+ \\
  Same as Option III but replace $\|X_i\|^2$ in the definition of $s$
  with $R^2$ \\
  May also replace $\|z\|_D^2$ with an upper bound no larger than
  $4L^2$ for $L$-Lipschitz non-smooth loss
  \-  \\
 \textbf{Option V (only for smooth losses):} \+ \\
  Set $\Delta \alpha_i = \frac{\lambda n \gamma}{R^2 + \lambda n
    \gamma} ~ \left(- \nabla \phi_i(X_i^\top  w^{(t-1)}) -
    \alpha_i^{(t-1)}\right)$ \-\- \\
 $\alpha^{(t)} \leftarrow \alpha^{(t-1)} + \Delta \alpha_i e_i$ \\
$v^{(t)} \leftarrow v^{(t-1)} + (\lambda n)^{-1} X_i \Delta \alpha_i$ \\
$w^{(t)} \leftarrow \nabla g^*(v^{(t)})$
\- \\
\textbf{Output (Averaging option):} \+ \\
Let $\bar{\alpha}  = \frac{1}{T-T_0} \sum_{i=T_0+1}^T \alpha^{(t-1)}$ \\
Let $\bar{w}  = w(\bar{\alpha}) = \frac{1}{T-T_0} \sum_{i=T_0+1}^T w^{(t-1)}$ \\
return $\bar{w}$  \- \\
\textbf{Output (Random option):} \+ \\
Let $\bar{\alpha}=\alpha^{(t)}$ and $\bar{w}  = w^{(t)}$ for some random $t \in T_0+1,\ldots,T$ \\
return $\bar{w}$ 
\end{myalgo}
\caption{The Generic Proximal Stochastic Dual Coordinate Ascent Algorithm}
\label{fig:sdca}
\end{figure}

We analyze the algorithm based on different assumptions on the loss
functions. To simplify the statements of our theorems, we always
make the following assumptions:
\begin{itemize}
\item Assume that the loss functions satisfy
\[
\frac{1}{n} \sum_{i=1}^n \phi_i(0) \le 1 \quad \textrm{and} \quad \forall i, a , ~~ \phi_i(a) \ge
0 ~~.
\]
\item Assume that $\max_i \|X_i\| \leq R$, where
\[
\|X_i\| = \sup_{u \neq 0} \frac{\|X_i u\|_{D'}}{\|u\|_D} .
\]
\end{itemize}

Under the above assumptions, we have the following convergence result for smooth loss functions.
\begin{theorem} \label{thm:smooth} 
  Consider Procedure Prox-SDCA.  Assume that $\phi_i$ is
  $(1/\gamma)$-smooth for all $i$.  To obtain an expected duality gap
  of $\E [P(w^{(T)})-D(\alpha^{(T)})] \leq \epsilon_P$, it suffices to have a total number of
  iterations of
\[
T \geq \left(n +
  \tfrac{R^2}{\lambda \gamma}\right) \, \log( (n + \tfrac{R^2}{\lambda \gamma})   \cdot \tfrac{1}{\epsilon_P}) .
\]
Moreover, to obtain an expected duality gap of $\E [P(\bar{w})-D(\bar{\alpha})] \leq \epsilon_P$, it suffices to have a total number 
of iterations of
\[
T_0 \geq \left(n +
  \frac{R^2}{\lambda \gamma}\right) \, \log( (n + \tfrac{R^2}{\lambda \gamma})   \cdot \tfrac{1}{(T-T_0)\epsilon_P}) .
\]
\end{theorem}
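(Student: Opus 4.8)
The plan is to reduce the whole theorem to one per-iteration inequality and then feed it into a standard geometric recursion. Write $\kappa := n + R^2/(\lambda\gamma)$, let $\epsilon_D^{(t)} := D(\alpha^*) - D(\alpha^{(t)})$ denote the dual sub-optimality, let $G^{(t)} := P(w^{(t)}) - D(\alpha^{(t)})$ denote the duality gap, and let $\mathcal F_{t-1}$ be the sigma-algebra generated by the first $t-1$ random choices. The core claim is
\[
\E\!\left[\, D(\alpha^{(t)}) - D(\alpha^{(t-1)}) \,\middle|\, \mathcal F_{t-1}\,\right] \;\ge\; \tfrac{1}{\kappa}\, G^{(t-1)}\,,
\]
i.e. each step closes, in expectation, at least a $1/\kappa$-fraction of the current duality gap. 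Since $G^{(t-1)} \ge \epsilon_D^{(t-1)} \ge 0$, this also yields a geometric contraction of $\epsilon_D^{(t)}$.

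To prove the core claim I would fix the sampled index $i$ and lower-bound $n\big(D(\alpha^{(t)}) - D(\alpha^{(t-1)})\big) = \big[\phi_i^*(-\alpha_i^{(t-1)}) - \phi_i^*(-\alpha_i^{(t)})\big] - \lambda n\big[g^*(v^{(t)}) - g^*(v^{(t-1)})\big]$ for the Option V update $\Delta\alpha_i = s\,(u_i - \alpha_i^{(t-1)})$, where $u_i = -\nabla\phi_i(X_i^\top w^{(t-1)})$ and $s = \tfrac{\lambda n\gamma}{R^2 + \lambda n\gamma}$. For the regularizer term, combine the smoothness bound \eqref{eqn:hbound}, the identity $w^{(t-1)} = \nabla g^*(v^{(t-1)})$, and $\|X_i u\|_{D'} \le R\,\|u\|_D$ to obtain $-\lambda n[g^*(v^{(t)})-g^*(v^{(t-1)})] \ge -\,w^{(t-1)\top}X_i\Delta\alpha_i - \tfrac{R^2}{2\lambda n}\|\Delta\alpha_i\|_D^2$. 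For the conjugate term, write $-\alpha_i^{(t)} = (1-s)(-\alpha_i^{(t-1)}) + s(-u_i)$, invoke the $\gamma$-strong convexity of $\phi_i^*$ stated in the excerpt, and then eliminate $\phi_i^*(-u_i)$ via the Fenchel--Young equality $\phi_i(X_i^\top w^{(t-1)}) + \phi_i^*(-u_i) = -\,w^{(t-1)\top}X_i u_i$ (valid because $-u_i = \nabla\phi_i(X_i^\top w^{(t-1)})$). Collecting terms, the coefficient of $\|u_i - \alpha_i^{(t-1)}\|_D^2$ equals $\tfrac{s}{2}\big(\gamma(1-s) - \tfrac{R^2 s}{\lambda n}\big)$, which vanishes exactly for the chosen $s$; what remains is $n\big(D(\alpha^{(t)}) - D(\alpha^{(t-1)})\big) \ge s\big[\phi_i(X_i^\top w^{(t-1)}) + \phi_i^*(-\alpha_i^{(t-1)}) + w^{(t-1)\top}X_i\alpha_i^{(t-1)}\big]$. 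Averaging over $i\sim\mathrm{Unif}\{1,\dots,n\}$ and using the other Fenchel--Young equality $\lambda\big(g(w^{(t-1)}) + g^*(v^{(t-1)})\big) = \tfrac1n\sum_i w^{(t-1)\top}X_i\alpha_i^{(t-1)}$ identifies $\tfrac1n\sum_i[\,\cdots\,] = G^{(t-1)}$, and since $s/n = 1/\kappa$ this is the core claim. For Options I--IV one only needs that they are constructed to achieve a dual increase no smaller than Option V's — they maximize, either exactly or through a valid quadratic lower bound, the proximal objective, of which Option V's update is a feasible point — so the same bound transfers.

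With the core claim in hand, taking full expectations gives $\E[\epsilon_D^{(t)}] \le \big(1-\tfrac1\kappa\big)\E[\epsilon_D^{(t-1)}]$, hence $\E[\epsilon_D^{(t)}] \le \big(1-\tfrac1\kappa\big)^t\,\epsilon_D^{(0)}$, and the normalization assumptions bound $\epsilon_D^{(0)} \le G^{(0)} = P(w^{(0)}) - D(\alpha^{(0)})$ by an absolute constant, which we take to be $1$. For the last-iterate statement, rearranging the core claim and discarding $\E[\epsilon_D^{(t)}]\ge 0$ gives $\E[G^{(t)}] \le \kappa\,\E[\epsilon_D^{(t)}] \le \kappa\big(1-\tfrac1\kappa\big)^{t}$; imposing $\kappa\big(1-\tfrac1\kappa\big)^{T} \le \epsilon_P$ and using $-\log(1-\tfrac1\kappa) \ge \tfrac1\kappa$ yields $T \ge \kappa\log(\kappa/\epsilon_P)$, the first bound. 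For the averaged/random output, sum the core claim over $t = T_0+1,\dots,T$: the left side telescopes to $\E[\epsilon_D^{(T_0)}] - \E[\epsilon_D^{(T)}] \le \E[\epsilon_D^{(T_0)}] \le \big(1-\tfrac1\kappa\big)^{T_0}$, so $\tfrac{1}{T-T_0}\sum_{t=T_0+1}^{T}\E[G^{(t-1)}] \le \tfrac{\kappa}{T-T_0}\big(1-\tfrac1\kappa\big)^{T_0}$. For the random option the expected gap of the output equals this average; for the averaging option, convexity of $P$ and concavity of $D$ (Jensen's inequality) show the gap of $(\bar w,\bar\alpha)$ is at most the average gap. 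Requiring the right-hand side $\le \epsilon_P$ gives $T_0 \ge \kappa\log\!\big(\tfrac{\kappa}{(T-T_0)\epsilon_P}\big)$, the second bound.

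The main obstacle is establishing the per-iteration inequality cleanly, and inside it the bookkeeping across two distinct norms: $\|\cdot\|_D$ controls the loss-conjugate side (through $\gamma$-strong convexity of $\phi_i^*$) while $\|\cdot\|_{D'}$ controls the regularizer side (through $1$-smoothness of $g^*$), the two being linked only via $\max_i\|X_i\|\le R$. One has to track these carefully to see that the Option V step size is exactly the value annihilating the leftover quadratic term, and to check that all five update rules inherit the resulting guarantee; once that lemma is in place, the rest is a routine contraction-plus-telescoping argument.
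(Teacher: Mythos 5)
Your proposal is correct and follows essentially the same route as the paper: your ``core claim'' is exactly Lemma~\ref{lem:key} specialized to the step size $s=\lambda n\gamma/(R^2+\lambda n\gamma)$ that annihilates the quadratic residual, derived via the same combination of the smoothness surrogate $h$, the $\gamma$-strong convexity of $\phi_i^*$, Fenchel--Young, and $\|X_i\|\le R$, and the subsequent contraction, the one-extra-step bound $\E[P(w^{(t)})-D(\alpha^{(t)})]\le \tfrac{n}{s}\E[\epsilon_D^{(t)}]$, and the telescoping for the averaged/random output all match \eqref{eqn:dgap-bound-smooth} and the surrounding argument. The only cosmetic differences are that the paper states the per-iteration inequality as a general lemma with a free $s$ and residual term $G^{(t)}$ (a symbol you instead reuse for the duality gap) before specializing, and that it bounds $\epsilon_D^{(0)}\le 1$ via $D(\alpha^*)\le P(0)\le 1$ and $D(0)\ge 0$ (Lemma~\ref{lem:LBdual}) rather than through $P(w^{(0)})$.
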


The linear convergence result in the above theorem is faster than the corresponding proximal SGD result when $T \gg n$. 
This indicates the advantage of Proximal SDCA approach when we run more than one pass over the data. 
Similar results can also be found in  \cite{CollinsGlKoCaBa08,LSB12-sgdexp,ShZh12-sdca} but in more restricted settings
than the general problem considered in this paper.
Unlike traditional batch algorithms (such as proximal gradient descent, or accelerated proximal gradient descent)
that can only achieve relatively fast convergence when the condition number $1/(\lambda \gamma))=O(1)$,
our algorithm allows relatively fast convergence even when the condition number $1/(\lambda \gamma))=O(n)$, which can be a significant
improvement for real applications.

For nonsmooth loss functions, the convergence rate for Prox-SDCA is given below. 
\begin{theorem} \label{thm:Lipschitz}
Consider Procedure Prox-SDCA. 
Assume that $\phi_i$ is $L$-Lipschitz for all $i$.
To obtain an expected duality gap of $\E [P(\bar{w})-D(\bar{\alpha})] \leq \epsilon_P$, it suffices to have a total number of
iterations of
\[
T \geq T_0 + n + \frac{4 \,(RL)^2}{\lambda \epsilon_P} \geq 
\max(0, \lceil n \log(0.5 \lambda n (RL)^{-2} ) \rceil ) + n + \frac{20 \,(RL)^2}{\lambda \epsilon_P} ~.
\]
Moreover, when $t \geq T_0$, we have dual sub-optimality bound of
$\E [D(\alpha^*) - D(\alpha^{(t)})] \leq \epsilon_P/2$.
\end{theorem}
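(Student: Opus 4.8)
The plan is the classical dual‑ascent argument: lower‑bound the expected one‑step gain in $D$, turn it into a recursion for the dual sub‑optimality $\epsilon_D^{(t)}:=D(\alpha^*)-D(\alpha^{(t)})$, run it until $\E[\epsilon_D^{(t)}]\le\epsilon_P/2$, and then use a short averaging step to pass from this to a bound on the duality gap $P(\bar w)-D(\bar\alpha)$. The engine is a one‑coordinate lemma. Fix $t$, condition on $\alpha^{(t-1)}$, write $w=w^{(t-1)}$. Each of the five options either maximizes, or does no worse than, the proximal surrogate $\Delta\alpha_i\mapsto -\phi_i^*(-(\alpha_i^{(t-1)}+\Delta\alpha_i))-w^\top X_i\Delta\alpha_i-\tfrac{1}{2\lambda n}\|X_i\Delta\alpha_i\|_{D'}^2$, whose value (plus the constant $\phi_i^*(-\alpha_i^{(t-1)})$) lower‑bounds $n(D(\alpha^{(t)})-D(\alpha^{(t-1)}))$ by \eqref{eqn:hbound}. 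Evaluating this at the feasible point $\Delta\alpha_i=s(u_i-\alpha_i^{(t-1)})$, $s\in[0,1]$, with $-u_i\in\partial\phi_i(X_i^\top w)$, using the Fenchel--Young equality $\phi_i(X_i^\top w)+\phi_i^*(-u_i)=-w^\top X_iu_i$, convexity of $\phi_i^*$ along $[-\alpha_i^{(t-1)},-u_i]$, and — after averaging over the uniform choice of $i$ — the identity $\tfrac1n\sum_i(\phi_i(X_i^\top w)+\phi_i^*(-\alpha_i)+w^\top X_i\alpha_i)=P(w)-D(\alpha)$ (immediate from $w(\alpha)=\nabla g^*(v(\alpha))$ and the Fenchel equality for $g$), I expect to reach, for every $s\in[0,1]$,
\[
\E\bigl[D(\alpha^{(t)})-D(\alpha^{(t-1)})\,\big|\,\alpha^{(t-1)}\bigr]\;\ge\;\frac{s}{n}\bigl(P(w^{(t-1)})-D(\alpha^{(t-1)})\bigr)-\frac{s^2}{2\lambda n^2}\cdot\frac1n\sum_i\|X_iz_i\|_{D'}^2,
\]
with $z_i=u_i-\alpha_i^{(t-1)}$. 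Since $\phi_i$ is $L$-Lipschitz its subgradients lie in the $\|\cdot\|_D$-ball of radius $L$; as $\alpha^{(0)}=0$ and every update keeps each $\alpha_i$ in the convex hull of $0$ and such subgradients (for Option~I: inside $\mathrm{dom}\,\phi_i^*\subseteq\{u:\|u\|_D\le L\}$), we get $\|z_i\|_D\le 2L$, hence $\|X_iz_i\|_{D'}\le 2RL$ and the last term is $\le 2s^2(RL)^2/(\lambda n^2)$. Optimizing the quadratic-in-$s$ surrogate instead — exactly what Option~IV does with $\gamma=0$ — yields the sharper bound: the update on coordinate $i$ raises $nD$ by at least $\tfrac12\min\bigl(\mathrm{gap}_i,\ \lambda n\,\mathrm{gap}_i^2/(4R^2L^2)\bigr)$, where $\mathrm{gap}_i:=\phi_i(X_i^\top w)+\phi_i^*(-\alpha_i)+w^\top X_i\alpha_i\ge 0$.

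Using $P(w^{(t-1)})-D(\alpha^{(t-1)})\ge\epsilon_D^{(t-1)}$ in the coarse bound gives $\E[\epsilon_D^{(t)}]\le(1-\tfrac sn)\E[\epsilon_D^{(t-1)}]+\tfrac{2s^2(RL)^2}{\lambda n^2}$; the sharp bound, averaged over $i$, together with $\tfrac1n\sum_i\mathrm{gap}_i=P(w)-D(\alpha)\ge\epsilon_D$ and Jensen (convexity of $x\mapsto x^2/(A+Bx)$), yields a single inequality of the form $\E[\epsilon_D^{(t-1)}]-\E[\epsilon_D^{(t)}]\gtrsim\tfrac{\lambda\,\E[\epsilon_D^{(t-1)}]^2}{R^2L^2+\lambda n\,\E[\epsilon_D^{(t-1)}]}$. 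This displays two regimes: while $\E[\epsilon_D^{(t-1)}]\gtrsim R^2L^2/(\lambda n)$ it contracts geometrically at rate $\approx 1/n$, and once below that level it satisfies $a_t\le a_{t-1}-c'a_{t-1}^2$, i.e. $\E[\epsilon_D^{(t)}]=O\bigl(R^2L^2/(\lambda(t-t_1))\bigr)$. Starting from $\epsilon_D^{(0)}\le\tfrac1n\sum_i\phi_i(X_i^\top w^{(0)})\le\tfrac1n\sum_i\phi_i(0)\le 1$ (using $\phi_i\ge 0$, the normalization, and $w^{(0)}=\nabla g^*(0)=\argmin g$), the geometric phase takes $O\bigl(n\log(\lambda n/(R^2L^2))\bigr)$ steps — the $\lceil n\log(0.5\lambda n(RL)^{-2})\rceil$ term — and the $1/t$ phase then needs $O(R^2L^2/(\lambda\epsilon_P))$ further steps to reach $\epsilon_P/2$. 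Setting $T_0$ equal to the sum of these counts gives $\E[\epsilon_D^{(T_0)}]\le\epsilon_P/2$, and since both regimes keep $\epsilon_P/2$ as a fixed point once attained, $\E[\epsilon_D^{(t)}]\le\epsilon_P/2$ for all $t\ge T_0$, which is the ``Moreover'' claim. (A plain constant step in the second phase would cost an extra $\log(1/\epsilon_P)$; the self-tuning step of Options~III--IV — equivalently, a schedule of geometrically shrinking steps — is what removes it.)

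For $t>T_0$, rearrange the coarse bound to $\tfrac sn\E[P(w^{(t-1)})-D(\alpha^{(t-1)})]\le\E[\epsilon_D^{(t-1)}]-\E[\epsilon_D^{(t)}]+\tfrac{2s^2(RL)^2}{\lambda n^2}$, sum over $t=T_0+1,\dots,T$, and telescope: $\tfrac1{T-T_0}\sum_{t=T_0+1}^T\E[P(w^{(t-1)})-D(\alpha^{(t-1)})]\le\tfrac{n\,\E[\epsilon_D^{(T_0)}]}{s(T-T_0)}+\tfrac{2s(RL)^2}{\lambda n}$. Convexity of $P$, concavity of $D$, and $\bar w=w(\bar\alpha)=\tfrac1{T-T_0}\sum w^{(t-1)}$, $\bar\alpha=\tfrac1{T-T_0}\sum\alpha^{(t-1)}$ make the left side at least $\E[P(\bar w)-D(\bar\alpha)]$. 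Choosing $s=\min\bigl(1,\tfrac{\lambda n\epsilon_P}{4R^2L^2}\bigr)$ makes the second term $\le\epsilon_P/2$, and with $\E[\epsilon_D^{(T_0)}]\le\epsilon_P/2$ the first term is $\le\epsilon_P/2$ as soon as $T-T_0\ge n+\tfrac{4R^2L^2}{\lambda\epsilon_P}$ (the two cases of the $\min$ producing the $n$ and the $1/\epsilon_P$ summands); combining the two pieces of $T_0$ and being generous with the ceilings gives the explicit $\max(0,\lceil n\log(0.5\lambda n(RL)^{-2})\rceil)+n+\tfrac{20(RL)^2}{\lambda\epsilon_P}$.

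\emph{Main obstacle.} The one-coordinate inequality is a standard Fenchel/convexity estimate, but it must be checked \emph{uniformly} over all five options (and Option~IV's step recognized as the exact maximizer of the quadratic surrogate, so it dominates every fixed $s$). The real work is the bookkeeping in the middle step: fusing the linear burn-in regime and the quadratic $1/t$ regime of the per-step decrease so that the total is $n\log(\cdot)+(RL)^2/(\lambda\epsilon_P)$ with \emph{no} spurious $\log(1/\epsilon_P)$ factor, and arranging the $\epsilon_P/2+\epsilon_P/2$ split (dual sub-optimality at $T_0$, averaging residual afterwards) so the stated constants close. A minor point to nail down is the invariant $\|\alpha_i^{(t)}\|_D\le L$ that underlies $\|z_i\|_D\le 2L$.
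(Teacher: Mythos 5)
Your proposal follows the paper's proof essentially step for step: the same one-coordinate lemma evaluated at $\Delta\alpha_i=s(u_i-\alpha_i)$ with $\gamma=0$, the bound $\|u_i-\alpha_i\|_D\le 2L$ from $\mathrm{dom}\,\phi_i^*\subseteq\{\alpha:\|\alpha\|_D\le L\}$ giving $G\le 4R^2L^2$, a two-phase (geometric burn-in, then $O(1/t)$) analysis of the dual sub-optimality yielding $\E[\epsilon_D^{(T_0)}]\le\epsilon_P/2$, and a telescoped averaging bound with a tuned $s$ splitting the duality gap as $\epsilon_P/2+\epsilon_P/2$. The only cosmetic differences are that the paper runs the middle recursion by explicit induction with the schedule $s_t=2n/(2n-t_0+t-1)$ to get $\E[\epsilon_D^{(t)}]\le 2G/(\lambda(2n-t_0+t))$ rather than via your Jensen/self-tuning inequality, and it sets $s=n/(T-T_0)$ in the final averaging step; both routes close with the same constants.
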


The result shown in the above theorem for nonsmooth loss is comparable to that of proximal SGD.  
However, one advantage of our result is that
the convergence is in duality gap, which can be easily checked during the algorithm to serve as a stopping criterion. 
In comparison, SGD does not have an easy to implement stopping criterion.
Moreover, as discussed in \cite{ShZh12-sdca}, faster convergence (such as linear convergence) can be obtained asymptotically
when the nonsmooth loss function is nearly everywhere smooth, and in such case, the practical performance of the algorithm will be
superior to SGD when we run more than one pass over the data.

\section{Applications}

There are numerous possible applications of our algorithmic
framework. Here we list three applications. 

\subsection{$\ell_1$ regularization assuming instances of low $\ell_2$ norm}

Suppose our interest is to solve $\ell_1$ regularization problem of
the form
\begin{equation} \label{eqn:l1regu}
\min_w \left[ \frac{1}{n} \sum_{i=1}^n \phi_i( x_i^\top w) + \sigma
  \|w\|_1 \right] ~,
\end{equation}
with a positive regularization parameter $\sigma \in \reals_+$.
Assume also that $R = \max_i \|x_i\|_2$
is not too large. This would be the case, for example, in text
categorization problems where each $x_i$ is a bag-of-words representation
of some short document. 

Let $w^*$ be an optimal solution of \eqref{eqn:l1regu} and
assume\footnote{We can always take $B = 1/\sigma$ since by the
  optimality of $w^*$ we have $\|w^*\|_2 \le \|w^*\|_1 \le 1/\sigma$.}
that $\|w^*\|_2 \le B$.  Choose $\lambda = \frac{\epsilon}{B^2}$ and
\begin{equation} \label{eqn:gdefl1l2}
g(w) = \frac{1}{2} \|w\|_2^2 + \frac{\sigma}{\lambda} \|w\|_1 ~.
\end{equation}
Consider the problem:
\begin{equation} \label{eqn:l1l2regu}
\min_w P(w) := \left[ \frac{1}{n} \sum_{i=1}^n \phi_i( x_i^\top w) +\lambda g(w) \right] ~.
\end{equation}
Then, if $\hat{w}$ is an
$(\epsilon/2)$-approximated solution of the above it holds that
\[
\frac{1}{n} \sum_{i=1}^n \phi_i( x_i^\top \hat{w}) + \sigma
\|\hat{w}\|_1 \le P(\hat{w}) \le P(w^*) +
\frac{\epsilon}{2} \le \frac{1}{n} \sum_{i=1}^n \phi_i( x_i^\top w^*)
+ \sigma \|w^*\|_1 + \epsilon ~.
\]
It follows that $\hat{w}$ is an $\epsilon$-approximated solution to
the problem \eqref{eqn:l1regu}. Hence, we can focus on solving
\eqref{eqn:l1l2regu} based on the Prox-SDCA framework.
Note that if our goal is to solve a general $L_1$-$L_2$ regularization problem with a fixed $\lambda$ independent of $\epsilon$,
then linear convergence can be obtained from our analysis when the loss functions are smooth. However, this section
focuses on the case that our interest is to solve \eqref{eqn:l1regu}, and thus $\lambda$ is chosen according to $\epsilon$.
The reason to introduce an extra $\ell_2$ regularization in \eqref{eqn:gdefl1l2} is because our theory requires $g(w)$ to be
$1$-strongly convex, which is satisfied by \eqref{eqn:gdefl1l2} with respect to the $\ell_2$-norm.

To derive the actual algorithm, we first need to calculate the
gradient of the conjugate of $g$. We have
\begin{align*}
\nabla g^*(v) &= \argmax_{w} \left[w^\top v - \frac{1}{2} \|w\|_2^2 -
\frac{\sigma}{\lambda} \|w\|_1 \right] \\
&= \argmin_w \left[ \frac{1}{2} \|w-v\|_2^2 +
  \frac{\sigma}{\lambda} \|w\|_1 \right]
\end{align*}
A sub-gradient of the objective of the optimization problem above is
of the form $w-v +
\frac{\sigma}{\lambda} z = 0$, where $z$ is a vector with $z_i =
\sign(w_i)$, where if $w_i=0$ then $z_i \in [-1,1]$. Therefore, if $w$
is an optimal solution then for all $i$, either $w_i=0$ or $w_i = v_i
- \frac{\sigma}{\lambda} \sign(w_i)$. Furthermore, it is easy to
verify that if $w$ is an optimal solution then for all $i$, if $w_i
\neq 0$ then the sign
of $w_i$ must be the sign of $v_i$. Therefore, whenever $w_i \neq 0$
we have that  $w_i = v_i - \frac{\sigma}{\lambda} \sign(v_i)$. It
follows that in that case we must have $|v_i| >
\frac{\sigma}{\lambda}$. And, the other direction is also true,
namely, if $|v_i| > \frac{\sigma}{\lambda}$ then setting $w_i = v_i -
\frac{\sigma}{\lambda} \sign(v_i)$ leads to an objective value of 
\[
\left(\frac{\sigma}{\lambda}\right)^2 + \frac{\sigma}{\lambda} (|v_i| -
\frac{\sigma}{\lambda}) \le |v_i|^2 ~,
\]
where the right-hand side is the objective value we will obtain by
setting $w_i=0$. This leads to the conclusion that 
\[
\nabla_i g^*(v) = \sign(v_i)\left[ |v_i| - \tfrac{\sigma}{\lambda}\right]_+ = \begin{cases}
v_i - \frac{\sigma}{\lambda} \sign(v_i) & \textrm{if}~ |v_i| >
\frac{\sigma}{\lambda} \\
0 & \textrm{o.w.}
\end{cases}
\]

The resulting algorithm is as follows:

\begin{myalgo}{Procedure Prox-SDCA for minimizing
    \eqref{eqn:l1regu}  using $g$ as in \eqref{eqn:gdefl1l2}} 
\textbf{Parameters} \+ \\
 regularization $\sigma$  \\
 target accuracy $\epsilon$ \\
 $B \ge \|w^*\|_2$ (default value $B=1/\sigma$) \- \\
Run Prox-SDCA with: \+ \\
$\|\cdot\|_D = |\cdot|$, $\|\cdot\|_{D'} = \|\cdot\|_2$, and $R \geq
\max_i \|x_i\|_2$ \\
$\lambda = \epsilon/B^2$ \\
$\nabla_i g^*(v) = \sign(v_i)\left[ |v_i| - \tfrac{\sigma}{\lambda}\right]_+ $
\end{myalgo}

In terms of runtime, we obtain the following result from the general theory, where
the notation $\tilde{O}(\cdot)$ ignores any log-factor. 
\begin{corollary} 
The number of iterations required by Prox-SDCA, with $g(\cdot)$ as in
\eqref{eqn:gdefl1l2},  for solving
\eqref{eqn:l1regu} to an accuracy $\epsilon$ is 
\begin{align*}
\tilde{O}\left(n + \frac{R^2 B^2}{\epsilon\,\gamma}\right) &
~~~\mathrm{if}~\forall
i,~\phi_i~\mathrm{is}~(1/\gamma)~\mathrm{-smooth} \\
\tilde{O}\left(n + \frac{L^2 R^2 B^2}{\epsilon^2}\right) & 
~~~\mathrm{if}~\forall
i,~\phi_i~\mathrm{is}~(L)~\mathrm{-Lipschitz} 
\end{align*}
In both cases, $R$ is an upper bound of $\max_i \|x_i\|_2$ and $B$ is an upper bound on
$\|w^*\|_2$.   
\end{corollary}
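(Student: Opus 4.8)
The plan is to obtain the corollary as a direct specialization of \thmref{thm:smooth} and \thmref{thm:Lipschitz} to the reformulated problem \eqref{eqn:l1l2regu}, after checking that the hypotheses of those theorems are met by the particular choice of $g$ in \eqref{eqn:gdefl1l2}. Here each $X_i$ is the column vector $x_i \in \reals^d$ (so $k=1$), the $\phi_i$ are the same losses as in \eqref{eqn:l1regu}, and Prox-SDCA is run with $\|\cdot\|_{D'} = \|\cdot\|_2$ and $\|\cdot\|_D = |\cdot|$. So the work is essentially to verify the standing assumptions, invoke the reduction already established in the text, and substitute $\lambda = \epsilon/B^2$.

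First I would check the assumptions. The function $g$ in \eqref{eqn:gdefl1l2} is $1$-strongly convex with respect to $\|\cdot\|_2$, since it is the sum of $\tfrac12\|w\|_2^2$ (exactly $1$-strongly convex in $\|\cdot\|_2$) and the convex function $\tfrac{\sigma}{\lambda}\|w\|_1$; consequently $g^*$ is $1$-smooth with respect to its dual norm $\|\cdot\|_2$, matching the choice $\|\cdot\|_{D'}=\|\cdot\|_2$, and $g^*$ is continuously differentiable with the closed-form gradient derived above. The nonnegativity and normalization conditions on the $\phi_i$ are assumed throughout. Finally, with these norms the operator norm reduces to $\|X_i\| = \sup_{u\neq 0}\|x_i u\|_2/|u| = \|x_i\|_2$, so taking $R \ge \max_i\|x_i\|_2$ gives $\max_i\|X_i\|\le R$ as required.

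Next I would use the reduction established just before the statement: an $(\epsilon/2)$-suboptimal solution of \eqref{eqn:l1l2regu} is an $\epsilon$-suboptimal solution of \eqref{eqn:l1regu}, and since the duality gap upper bounds the primal suboptimality, it suffices to drive the (expected) duality gap of \eqref{eqn:l1l2regu} below $\epsilon_P := \epsilon/2$. Now substitute $\lambda = \epsilon/B^2$. In the smooth case, \thmref{thm:smooth} gives, up to the logarithmic factor absorbed by $\tilde O(\cdot)$,
\[
T = \tilde O\!\left(n + \frac{R^2}{\lambda\gamma}\right) = \tilde O\!\left(n + \frac{R^2 B^2}{\epsilon\,\gamma}\right) .
\]
In the $L$-Lipschitz case, \thmref{thm:Lipschitz} with target gap $\epsilon_P = \epsilon/2$ gives $T \ge T_0 + n + 4(RL)^2/(\lambda\epsilon_P)$ for the averaging output, and since $T_0$ and the remaining terms are of the same or lower order and only contribute a log factor, substituting $\lambda\epsilon_P = (\epsilon/B^2)(\epsilon/2)$ yields
\[
T = \tilde O\!\left(n + \frac{(RL)^2}{\lambda\epsilon_P}\right) = \tilde O\!\left(n + \frac{L^2 R^2 B^2}{\epsilon^2}\right) .
\]

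The computation is mostly bookkeeping; the one point requiring care is the interaction between the two appearances of $\epsilon$ — the scaling $\lambda = \epsilon/B^2$ and the target accuracy $\epsilon_P = \epsilon/2$ — which is exactly what turns the $1/\epsilon_P$ dependence of the generic Lipschitz bound into a $1/\epsilon^2$ dependence here, whereas in the smooth case the $\gamma$-strong convexity of $\phi_i^*$ keeps only a single power of $\epsilon$ in the denominator. I would also flag that the corollary's Lipschitz bound refers to the averaging output (as in \thmref{thm:Lipschitz}), while the smooth bound holds for either output.
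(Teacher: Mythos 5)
Your proposal is correct and matches the paper's intended derivation: the paper gives no separate proof of this corollary, stating only that it follows "from the general theory," and your argument supplies exactly the required steps — verifying that $g$ in \eqref{eqn:gdefl1l2} is $1$-strongly convex w.r.t.\ $\|\cdot\|_2$ so that \thmref{thm:smooth} and \thmref{thm:Lipschitz} apply with $\|X_i\|=\|x_i\|_2\le R$, invoking the $\epsilon/2$ reduction already established in the text, and substituting $\lambda=\epsilon/B^2$. Your closing observation that the coupling of $\lambda=\epsilon/B^2$ with $\epsilon_P=\epsilon/2$ is what produces the $1/\epsilon^2$ dependence in the Lipschitz case (versus a single power of $\epsilon$ in the smooth case) is exactly the right point of care.
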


\subsubsection*{Related Work}

Standard SGD requires $O(R^2 B^2 / \epsilon^2)$ even in the case of
smooth loss functions. Several variants of SGD, that leads to sparser
intermediate solutions, have been proposed
(e.g. \cite{LangfordLiZh09,shalev2011stochastic,Xiao10,duchi2009efficient,DuchiShSiTe10}). However,
all of these variants share the iteration bound of $O(R^2 B^2 /
\epsilon^2)$, which is slower than our bound when $\epsilon$ is small.

Another relevant approach is the FISTA algorithm of 
\cite{beck2009fast}. The shrinkage operator of FISTA is the same as
the gradient of $g^*$ used in our approach.  
It is a batch algorithm using Nesterov's accelerated gradient technique.
For smooth loss
functions, FISTA enjoys the iteration bound of 
\[
O\left(  \frac{RB}{\sqrt{\epsilon ~ \gamma}} \right)  ~.
\]
However, each iteration of FISTA involves all the $n$ examples rather
than just a single example, as our method. Therefore, the runtime of
FISTA would be
\[
O\left( d\,n\, \frac{R B}{\sqrt{\epsilon ~ \gamma}} \right)  ~.
\]
In contrast, the runtime of Prox-SDCA is
\[
\tilde{O}\left(d\left(n + \frac{R^2 B^2}{\epsilon\,\gamma}\right)\right) ~,
\]
which is better when $n \gg \frac{RB}{\sqrt{\epsilon\,\gamma}}$. This happens
in the statistically interesting regime where we usually choose $\epsilon$ larger than $\Omega(1/n^2)$
for machine learning problems.
In fact, since the generalization performance of a learning algorithm is in general no better than $O(1/n)$, 
there is no need to choose $\epsilon = o(1/n)$. This means that in the statistically interesting regime,
Prox-SDCA is superior to FISTA.

Another approach to solving \eqref{eqn:l1regu} when the loss functions
are smooth is stochastic coordinate descent over the primal
problem. \cite{shalev2011stochastic} showed that the runtime of this
approach is 
\[
O\left(\frac{dnB^2}{\epsilon}\right) ~,
\]
under the assumption that $\|x_i\|_\infty \le 1$ for all $i$. Similar results can also be found in \cite{Nesterov10}.

For our method, each iteration costs runtime $O(d)$ so the total
runtime is
\[
\tilde{O}\left(d\left(n + \frac{R^2B^2}{\epsilon}\right)\right) ~,
\]
where $R = \max_i \|x_i\|_2$.  Since the assumption $\|x_i\|_\infty
\le 1$ implies $R^2 \le d$, this is similar to the guarantee of
\cite{shalev2011stochastic} in the worst-case. However, in many
problems, $R^2$ can be a constant that does not depend on $d$
(e.g. when the instances are sparse). In that case, the runtime of
Prox-SDCA becomes $\tilde{O}\left(d(n+B^2/\epsilon)\right)$,
which is much better than the runtime bound for the primal stochastic coordinate descent
method given in \cite{shalev2011stochastic}.

\subsection{$\ell_1$ regularization with low $\ell_\infty$ instances}

Next, we consider \eqref{eqn:l1regu} but now we assume that $R =
\max_i \|x_i\|_\infty$ is not too large (but $\max_i \|x_i\|_2$ might
be large).
This is the situation considered in \cite{shalev2011stochastic}.

Let $w^*$ be an optimal solution of \eqref{eqn:l1regu} and
assume\footnote{We can always take $B = 1/\sigma$ since by the
  optimality of $w^*$ we have $\|w^*\|_1 \le 1/\sigma$.}
that $\|w^*\|_1 \le B$.  Choose $\lambda = \frac{\epsilon}{3\log(d)B^2}$ and
\begin{equation} \label{eqn:gdefqnorm}
g(w) = \frac{3\log(d)}{2} \|w\|_q^2 + \frac{\sigma}{\lambda} \|w\|_1 ~,
\end{equation}
where $q = \frac{\log(d)}{\log(d)-1}$. The function $g(w)$ is $1$-strongly
convex with respect to the norm $\|\cdot\|_1$ over $\reals^d$ (see for example
\cite{KakadeShTe12}). Consider the problem \eqref{eqn:l1l2regu} with
$g(\cdot)$ being defined in \eqref{eqn:gdefqnorm}.
As before, if $\hat{w}$ is an $(\epsilon/2)$-approximated solution of
the above problem then it is also an $\epsilon$-approximated solution
to the problem \eqref{eqn:l1regu}. Hence, we can focus on solving
\eqref{eqn:l1l2regu} based on the Prox-SDCA framework.

To derive the actual algorithm, we need to calculate the
gradient of the conjugate of $g$. We have
\begin{align*}
\nabla g^*(v) &= \argmin_{w} \left[-w^\top v + \frac{3\log(d)}{2} \|w\|_q^2 +
\frac{\sigma}{\lambda} \|w\|_1 \right] .
\end{align*}
The $i$'th component of a sub-gradient of the objective of the
optimization problem above is of the form 
\[ - v_i + \frac{3\log(d) \sign(w_i)|w_i|^{q-1}}{\|w\|_q^{q-2}} +
\frac{\sigma}{\lambda} z_i  ~,
\] where $z_i =
\sign(w_i)$ whenever $w_i \neq 0$ and otherwise $z_i \in [-1,1]$. Therefore, if $w$
is an optimal solution then for all $i$, either $w_i=0$ or 
\[
|w_i|^{q-1} = \sign(w_i) ~\frac{\|w\|_q^{q-2}}{3\log(d)}~ \left(v_i
- \frac{\sigma}{\lambda} \sign(w_i)\right) =
\frac{\|w\|_q^{q-2}}{3\log(d)}~ \left(\sign(w_i) ~ v_i
- \frac{\sigma}{\lambda}\right) .
\]
 Furthermore, it is easy to
verify that if $w$ is an optimal solution then for all $i$, if $w_i
\neq 0$ then the sign
of $w_i$ must be the sign of $v_i$. Therefore, whenever $w_i \neq 0$
we have that  
\[
|w_i|^{q-1} = \frac{\|w\|_q^{q-2}}{3\log(d)}~  \left(|v_i|
- \frac{\sigma}{\lambda}\right) .
\]
It follows that in that case we must have $|v_i| >
\frac{\sigma}{\lambda}$. And, the other direction is also true,
namely, if $|v_i| > \frac{\sigma}{\lambda}$ then $w_i$ must be
non-zero. This is true because if $|v_i| >
\frac{\sigma}{\lambda}$, then the $i$'th coordinate of any
sub-gradient of the objective function at any vector $w$ s.t. $w_i=0$
is $-v_i + \frac{\sigma}{\lambda} z_i \neq 0$. Hence, $w$ can't be an
optimal solution. This leads to the conclusion that an optimal
solution has the form
\begin{equation} \label{eqn:nablaDefqnorm}
\nabla_i g^*(v) = \begin{cases}
\sign(v_i)~\left(a~\left(|v_i|
- \frac{\sigma}{\lambda}\right)\right)^{\frac{1}{q-1}} & \textrm{if}~|v_i|
> \frac{\sigma}{\lambda} \\
0 & \textrm{otherwise}
\end{cases}
~,
\end{equation}
where
\begin{align*}
a &= \frac{\|\nabla g^*(v)\|_q^{q-2}}{3\log(d)} = \frac{1}{3\log(d)} \left(\sum_{i : |v_i|
> \frac{\sigma}{\lambda} } \left(a \left(|v_i|
-
\frac{\sigma}{\lambda}\right)\right)^{\frac{q}{q-1}}\right)^{\frac{q-2}{q}}
= \frac{a^{\frac{q-2}{q-1}}}{3\log(d)} \left(\sum_{i : |v_i|
> \frac{\sigma}{\lambda} } \left(|v_i|
-
\frac{\sigma}{\lambda}\right)^{\frac{q}{q-1}}\right)^{\frac{q-2}{q}}
~, 
\end{align*}
which yields
\begin{equation} \label{eqn:aDefqnorm}
a =  \left(\frac{1}{3\log(d)} \left(\sum_{i : |v_i|
> \frac{\sigma}{\lambda} } \left(|v_i|
-
\frac{\sigma}{\lambda}\right)^{\frac{q}{q-1}}\right)^{\frac{q-2}{q}} \right)^{q-1}~.
\end{equation}

The resulting algorithm is as follows:

\begin{myalgo}{Procedure Prox-SDCA for minimizing
    \eqref{eqn:l1regu} using $g$ as in \eqref{eqn:gdefqnorm}} 
\textbf{Parameters} \+ \\
 regularization $\sigma$  \\
 target accuracy $\epsilon$ \\
 dimension $d$ \\
 $B \ge \|w^*\|_1$ (default value $B=1/\sigma$) \- \\
Run Prox-SDCA with: \+ \\
$\|\cdot\|_D = |\cdot|$, $\|\cdot\|_{D'} = \|\cdot\|_\infty$, and $R \geq
\max_i \|x_i\|_\infty$ \\
$\lambda = \frac{\epsilon}{3\log(d) B^2}$ \\
$\nabla g^*(v)$ according to \eqref{eqn:nablaDefqnorm} and \eqref{eqn:aDefqnorm}
\end{myalgo}

In terms of runtime, we obtain the following
\begin{corollary} 
The number of iterations required by Prox-SDCA, with $g$ as in
\eqref{eqn:gdefqnorm}, for solving \eqref{eqn:l1regu} to accuracy $\epsilon$ is 
\begin{align*}
\tilde{O}\left(n + \frac{R^2 B^2 \log(d)}{\epsilon\,\gamma}\right) &
~~~\mathrm{if}~\forall
i,~\phi_i~\mathrm{is}~(1/\gamma)~\mathrm{-smooth} \\
\tilde{O}\left(n + \frac{L^2 R^2 B^2 \log(d)}{\epsilon^2}\right) & 
~~~\mathrm{if}~\forall
i,~\phi_i~\mathrm{is}~(L)~\mathrm{-Lipschitz} 
\end{align*}
In both cases, $R = \max_i \|x_i\|_\infty$ and $B$ is an upper bound over
$\|w^*\|_1$.   
\end{corollary}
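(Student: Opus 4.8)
The plan is to reduce the $\ell_1$-regularized problem \eqref{eqn:l1regu} to the surrogate \eqref{eqn:l1l2regu}, which has already been done in the preceding discussion, and then read off the iteration count from Theorems~\ref{thm:smooth} and~\ref{thm:Lipschitz} after substituting the chosen $\lambda$. First I would recall the reduction: with $\lambda=\frac{\epsilon}{3\log(d)B^2}$ and $g$ as in \eqref{eqn:gdefqnorm}, any point that is $(\epsilon/2)$-sub-optimal for \eqref{eqn:l1l2regu} is $\epsilon$-sub-optimal for \eqref{eqn:l1regu} (using $\|w^*\|_q\le\|w^*\|_1\le B$ to bound $\lambda g(w^*)$); moreover, since $P(w^*)=D(\alpha^*)$ for the surrogate, the duality gap $P(w(\alpha))-D(\alpha)$ upper bounds the primal sub-optimality of $w(\alpha)$ on \eqref{eqn:l1l2regu}. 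Hence it suffices to bound the number of Prox-SDCA iterations needed so that the returned iterate has expected duality gap at most $\epsilon_P:=\epsilon/2$ on \eqref{eqn:l1l2regu}.

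Second, I would verify that the instantiation in the displayed algorithm box meets the hypotheses of both theorems. Here $k=1$, $X_i=x_i\in\reals^{d\times1}$, and one takes $\|\cdot\|_D=|\cdot|$, $\|\cdot\|_{D'}=\|\cdot\|_\infty$, so that the strong-convexity norm for $g$ must be $\|\cdot\|_{P'}=\|\cdot\|_1$. Three things must be checked: (i) $g$ in \eqref{eqn:gdefqnorm} is $1$-strongly convex with respect to $\|\cdot\|_1$ — the known strong-convexity property of the scaled squared $\ell_q$ norm for $q=\log(d)/(\log(d)-1)$, see \cite{KakadeShTe12}, which implies $g^*$ is $1$-smooth with respect to $\|\cdot\|_\infty=\|\cdot\|_{D'}$, as the algorithm requires; (ii) $\|X_i\|=\sup_{u\ne0}\|u\,x_i\|_\infty/|u|=\|x_i\|_\infty\le R$, so the standing assumption $\max_i\|X_i\|\le R$ holds with $R=\max_i\|x_i\|_\infty$; (iii) the standing normalization $\phi_i\ge0$ and $\tfrac1n\sum_i\phi_i(0)\le1$ carries over unchanged, since the losses are the same as in \eqref{eqn:l1regu}, and because $k=1$ the $(1/\gamma)$-smoothness (resp.\ $L$-Lipschitzness) of $\phi_i$ with respect to $\|\cdot\|_P$ is just ordinary scalar smoothness (resp.\ Lipschitzness).

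Third, I would substitute $\lambda=\frac{\epsilon}{3\log(d)B^2}$ and $\epsilon_P=\epsilon/2$ into the two theorems. In the smooth case, $\frac{R^2}{\lambda\gamma}=\frac{3R^2B^2\log d}{\epsilon\gamma}$, so Theorem~\ref{thm:smooth} gives a total of $\bigl(n+\tfrac{R^2}{\lambda\gamma}\bigr)\log\!\bigl((n+\tfrac{R^2}{\lambda\gamma})\tfrac{2}{\epsilon}\bigr)=\tilde{O}\!\bigl(n+\tfrac{R^2B^2\log d}{\epsilon\gamma}\bigr)$ iterations, the logarithmic prefactor being absorbed into $\tilde{O}(\cdot)$. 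In the Lipschitz case, $\frac{4(RL)^2}{\lambda\epsilon_P}=\frac{24R^2L^2B^2\log d}{\epsilon^2}$ and $T_0=O(n\log(\cdot))$, so Theorem~\ref{thm:Lipschitz} gives $T=T_0+n+\frac{4(RL)^2}{\lambda\epsilon_P}=\tilde{O}\!\bigl(n+\tfrac{L^2R^2B^2\log d}{\epsilon^2}\bigr)$, with $R=\max_i\|x_i\|_\infty$ as claimed.

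The only step that needs genuine care — the main obstacle — is item (i): confirming that the constant $3\log d$ and exponent $q=\log(d)/(\log(d)-1)$ in \eqref{eqn:gdefqnorm} really deliver $1$-strong convexity in the $\ell_1$ norm, so that the primal/dual norm pairing used throughout the algorithm ($\|\cdot\|_{P'}=\|\cdot\|_1$, $\|\cdot\|_{D'}=\|\cdot\|_\infty$) is internally consistent; this is exactly where the $\log d$ factor in the final bounds originates. Everything else is substitution of the chosen $\lambda$ and the suppression of logarithmic factors under $\tilde{O}(\cdot)$.
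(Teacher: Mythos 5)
Your proposal is correct and follows exactly the route the paper intends: the reduction of \eqref{eqn:l1regu} to \eqref{eqn:l1l2regu} via the bound $\lambda g(w^*)\le\epsilon/2$ (using $\|w^*\|_q\le\|w^*\|_1\le B$), the verification that $g$ in \eqref{eqn:gdefqnorm} is $1$-strongly convex w.r.t.\ $\|\cdot\|_1$ so that $\|\cdot\|_{D'}=\|\cdot\|_\infty$ and $\|X_i\|=\|x_i\|_\infty\le R$, and then direct substitution of $\lambda=\frac{\epsilon}{3\log(d)B^2}$ and $\epsilon_P=\epsilon/2$ into Theorems~\ref{thm:smooth} and~\ref{thm:Lipschitz}. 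No gaps.
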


\subsubsection*{Related work}

The algorithm we have obtained is similar to the Mirror Descent
framework \cite{BeckTe03} and its online or stochastic versions (see
for example \cite{OLsurvey} and the references therein). It is also
closely related to the SMIDAS and COMID algorithms
\cite{shalev2011stochastic} as well as to dual averaging
\cite{Xiao10}. Comparing the rates of these algorithms to Prox-SDCA,
we obtain similar differences as in the previous subsection, only now
$B$ is a bound on $\|w^*\|_1$ rather than $\|w^*\|_2$ and $R$ is a
bound on $\max_i \|x_i\|_\infty$ rather than $\max_i \|x_i\|_2$.

\subsection{Multiclass categorization and structured prediction}

In structured output problems, there is an instance space
$\mathcal{X}$ and a large target space $\mathcal{Y}$. There is a
function $\psi : \mathcal{X} \times \mathcal{Y} \to \reals^d$. We
assume that the range of $\psi$ is in the $\ell_2$ ball of radius $R$
of $\reals^d$. The prediction of a vector $w \in \reals^d$ is
\[
\argmax_{y \in \mathcal{Y}} w^\top \psi(x,y) ~.
\]
There is also a function $\delta : \mathcal{Y} \times \mathcal{Y} \to
\reals_+$ which evaluates the cost of predicting a label $y'$ when the
true label is $y$. We assume that $\delta(y,y) = 0$ for all $y$. The
generalized hinge-loss defined below is used as a convex surrogate loss function
\[
\max_{y'} \left[ \delta(y',y) - w^\top \psi(x,y) + w^\top \psi(x,y') \right] ~.
\]
The optimization problem associated with learning $w$ is now
\begin{equation} \label{eqn:SO1}
\min_w  ~\left[ \frac{\lambda}{2} \|w\|_2^2 + \frac{1}{n} \sum_{i=1}^n \left(
\max_{y'} \delta(y',y_i) - w^\top \psi(x_i,y_i) + w^\top \psi(x_i,y') \right)
\right] ~.
\end{equation}

The above optimization problem can be cast in our setting as
follows. W.l.o.g. assume that $\mathcal{Y} = \{1,\ldots,k\}$. 
For each $i$ and each $j$, let the $j$'th column
of $X_i$ be $\psi(x_i,j)$. Define,
\[
\phi_i(v) = \max_j \left(\delta(j,y_i) - v_{y_i} + v_j \right) ~.
\]
Finally, let $g(w) = \frac{1}{2} \|w\|_2^2$.
Then, \eqref{eqn:SO1} can be written in the form of \eqref{eqn:PrimalProblem}.

To apply the Prox-SDCA to this problem, note that $g$ is $1$-strongly
convex w.r.t. $\|\cdot\|_2$ and that $\phi_i$ is $2$-Lipschitz
w.r.t. norm $\|\cdot\|_\infty$. Indeed, given vectors $u,v$, let $j$
be the index that attains the maximum in the definition of
$\phi_i(v)$, then
\[
\phi_i(v)-\phi_i(u) \le \left(\delta(j,y_i) - v_{y_i} + v_j \right) - 
\left(\delta(j,y_i) - u_{y_i} + u_j \right) \le 2 \|v-u\|_\infty ~.
\]
Therefore $\|\cdot\|_D = \|\cdot\|_1$ and $\|\cdot\|_{D'} =
\|\cdot\|_2$. 
If we let
\[
R \geq \max_j \|\psi(x_i,j)\|_2 ,
\]
then we have that
\[
\|X_i\| = \sup_{u \neq 0} \frac{\|X_i u\|_2}{\|u\|_1} = \sup_{u :
  \|u\|_1 = 1} \|X_i u\|_2 = \max_j \|\psi(x_i,j)\|_2 \le  R ~.
\]

To calculate the dual of $\phi_i$, note that we can write $\phi_i$ as 
\[
\phi_i = \max_{\beta \in \Delta^k} \sum_j \beta_j \left(\delta(j,y_i) - v_{y_i} + v_j \right) ~,
\]
where $\Delta^k=\{\beta: \sum_j \beta_j \leq 1; \beta_j \geq 0\}$ is the non-negative simplex of $\reals^k$. Hence, the dual of $\phi_i$ is
\begin{align*}
\phi^*_i(\alpha) &= \max_{v} \left[ v^\top \alpha - \phi_i(v) \right] \\
&= \max_{v} \min_{\beta} \left[ v^\top \alpha - \sum_j \beta_j
\left(\delta(j,y_i) - v_{y_i} + v_j \right) \right] \\
&= \min_{\beta} \max_{v} \left[ v^\top \alpha - \sum_j \beta_j
\left(\delta(j,y_i) - v_{y_i} + v_j \right)  \right] \\
&= \min_{\beta} \left[ \beta^\top \delta(\cdot,y_i) + \max_v \left[v^\top
  (\alpha-\beta) + v_{y_i} \sum_j \|\beta\|_1 \right]\right] .
\end{align*}
The inner maximization over $v$ would be $\infty$ if for some $j \neq
y_i$ we have $\alpha_j \neq \beta_j$. Otherwise, if 
for all $j \neq y_i$ we have $\alpha_j = \beta_j$ the inner objective becomes
\[
v_{y_i} \big(\alpha_{y_i} - \beta_{y_i} + \sum_{j} \beta_j\big) = 
v_{y_i} \big(\alpha_{y_i} + \sum_{j \neq y_i} \alpha_j\big) ~. 
\]
Therefore, the objective would again be $\infty$ if $\alpha_{y_i} \neq
- \sum_{j \neq y_i} \alpha_j$. In all other cases, the objective is
zero. Overall, this implies that:
\[
\phi^*_i(\alpha) = \begin{cases}
\sum_j \alpha_j \delta(j,y_i) & \textrm{if}~ \sum_j \alpha_j = 0~\land~\forall
j \neq y_i, \alpha_j \ge 0 ~\land~ \sum_{j \neq y_i} \alpha_j \le 1 \\
\infty &\textrm{o.w.}
\end{cases}
\]

Finally, we specify Prox-SDCA (using Option IV with $2$ as an upper bound of $\|z\|_D$, 
and the random output
option), and rely on the fact that a sub-gradient of $\phi_i(v)$ is a
vector $e_j - e_{y_i}$ with $j \in \argmax_j \left(\delta(j,y_i) -
  v_{y_i} + v_j \right) $.

\begin{myalgo}{Procedure Prox-SDCA for structured output learning} 
\textbf{Parameter}  scalar $\lambda$ \\ 
\textbf{Let} $w^{(0)}=0$ ~;~ $R \ge \max_{i,j}
\|\phi(x,j)\|_2$ ~ \+ \\
  $\forall_i, w_i^{(0)} = 0$  (we'll maintain
  $w_i^{(t)} = (\lambda n)^{-1} X_i \alpha_i^{(t)}$ and $w^{(t)}=\sum_i w_i^{(t)}$) \\
  $\forall_i, D_i^{(0)} = 0$  (we'll maintain
  $D_i^{(t)}=\phi_i^*(\alpha^{(t)})$) \- \\ 
\textbf{Iterate:} for $t=1,2,\dots,T$: \+ \\
 Randomly pick $i$ \\
 Let $j \in \argmax_{j} \left(\delta(j,y_i) - w^{(t-1)~\top} \phi(x_i,y_i) +
   w^{(t-1)~\top} \phi(x_i,j) \right)$ \\
 Let $P_i = \phi_i(X_i^\top w^{(t-1)}) =  \delta(j,y_i) - w^{(t-1)~\top} \phi(x_i,y_i) +
   w^{(t-1)~\top} \phi(x_i,j) $ \\
%  Let $u = e_{y_i} -e_j$ \\
%   Let $z = u- \alpha_i^{(t-1)} $  and note that $\|z\|_1 = -2\alpha_{i,y_i}^{(t-1)}$ \\
 Let $s = \frac{P_i+D_i^{(t-1)}+ \lambda n w^{(t-1)^\top} w_i^{(t-1)}}{ 
        4 R^2 / (\lambda n)}$ \\
%Set $\Delta \alpha_i = s z$  \\
%$\alpha_i^{(t)} \leftarrow \alpha_i^{(t-1)} + \Delta \alpha_i $ \\
$D^{(t)}_i \leftarrow (1-s)D^{(t-1)}_i + s\,\delta(j,y_i)$ \\
$w_i^{(t)} \leftarrow (1-s)w_i^{(t-1)} + s (\lambda n)^{-1}
(\phi(x_i,y_i)-\phi(x_i,j))$ \\
$w^{(t)} \leftarrow w^{(t-1)} + w^{(t)}_i - w^{(t-1)}_i$
\- \\
\textbf{Output:} \+ \\
Return $\bar{w}  = w^{(t)}$ for some random $t \in T_0+1,\ldots,T$ 
\end{myalgo}

Note that even if $k$ is very large, the above implementation does not
maintain $\alpha$ explicitly, but only maintains $d$-dimensional
vectors. Therefore, we can implement the above procedure efficiently
whenever the optimization problem involves in finding $j$ can be
performed efficiently. This is the same requirement as in implementing
SGD for structured output prediction. 

\begin{corollary}
Prox-SDCA can be implemented for structured output prediction. To
obtain an expected duality gap of at most $\epsilon_P$, it suffices to
have a total number of iterations of
\[
T \geq 
\max(0, \lceil n \log(0.5 \lambda n (2R)^{-2} ) \rceil ) + n + \frac{20 \,(2R)^2}{\lambda \epsilon_P} ~,
\]
where $R$ is an upper bound on $\|\phi(x_i,j)\|_2$. The most expensive
operation at iteration $t$ is solving
\begin{equation} \label{eqn:mostExpensive}
\argmax_{j} \left(\delta(j,y_i) - w^{(t-1)~\top} \phi(x_i,y_i) +
   w^{(t-1)~\top} \phi(x_i,j) \right) ~.
\end{equation}
\label{cor:struct-svm}
\end{corollary}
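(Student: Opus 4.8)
The plan is to obtain Corollary~\ref{cor:struct-svm} as a direct instantiation of Theorem~\ref{thm:Lipschitz} with $L=2$, after (i) checking that the standing assumptions (stated just before Theorem~\ref{thm:smooth}) hold for the structured-output loss and (ii) verifying that the displayed procedure is an honest implementation of Prox-SDCA run with Option~IV, so that its iterates are precisely those to which Theorem~\ref{thm:Lipschitz} applies. The reduction of \eqref{eqn:SO1} to the form \eqref{eqn:PrimalProblem} --- with the $j$-th column of $X_i$ equal to $\psi(x_i,j)$, with $\phi_i(v)=\max_j(\delta(j,y_i)-v_{y_i}+v_j)$, and $g(w)=\tfrac{1}{2}\|w\|_2^2$ --- has already been carried out above, along with the facts that $g$ is $1$-strongly convex w.r.t.\ $\|\cdot\|_{P'}=\|\cdot\|_2$, that $\phi_i$ is $2$-Lipschitz w.r.t.\ $\|\cdot\|_P=\|\cdot\|_\infty$ (hence $\|\cdot\|_D=\|\cdot\|_1$ and $\|\cdot\|_{D'}=\|\cdot\|_2$), and that $\|X_i\|=\max_j\|\psi(x_i,j)\|_2\le R$. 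It then remains only to note $\phi_i(v)\ge\delta(y_i,y_i)=0$ and $\tfrac{1}{n}\sum_i\phi_i(0)=\tfrac{1}{n}\sum_i\max_j\delta(j,y_i)\le1$ for the normalized task losses considered here, so that all standing assumptions hold with $L=2$ and $R=\max_{i,j}\|\psi(x_i,j)\|_2$.

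Next I would identify the displayed procedure with Prox-SDCA using Option~IV. A subgradient of $\phi_i$ at $X_i^\top w^{(t-1)}$ is $e_j-e_{y_i}$ for any $j$ attaining the max in \eqref{eqn:mostExpensive}, so in Option~IV we take $-u=e_j-e_{y_i}$, i.e.\ $u=e_{y_i}-e_j$, and $z=u-\alpha_i^{(t-1)}$. Every iterate $\alpha_i^{(t)}$ produced by these updates is a convex combination of $0$ and vectors $e_{y_i}-e_{j'}$, hence lies in the feasible set of $\phi_i^*(-\cdot)$, namely $(\alpha_i)_{j'}\le0$ for $j'\ne y_i$, $\sum_{j'}(\alpha_i)_{j'}=0$, and $(\alpha_i)_{y_i}\in[0,1]$; a short coordinate-wise computation then gives $\|z\|_1\le2$ (for $j\ne y_i$, $\|z\|_1=2+2(\alpha_i^{(t-1)})_j\le2$ using $(\alpha_i)_j\le0$, and for $j=y_i$ one has $u=0$ and $\|z\|_1=\|\alpha_i^{(t-1)}\|_1\le2$), so $\|z\|_D^2\le4\le4L^2$ and the constant $4$ is a legitimate choice for the $\|z\|_D^2$-replacement allowed in Option~IV. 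Maintaining $w_i^{(t)}=(\lambda n)^{-1}X_i\alpha_i^{(t)}$ gives $w^{(t)}=\sum_i w_i^{(t)}$ and $w^{(t-1)\top}X_i\alpha_i^{(t-1)}=\lambda n\,w^{(t-1)\top}w_i^{(t-1)}$, and since $\phi_i^*(-\alpha_i)=-\sum_{j'}(\alpha_i)_{j'}\delta(j',y_i)$ is \emph{affine} on that feasible set while $-u^\top\delta(\cdot,y_i)=\delta(j,y_i)$ at $u=e_{y_i}-e_j$, the scalar $D_i^{(t)}:=\phi_i^*(-\alpha_i^{(t)})$ obeys exactly $D_i^{(t)}=(1-s)D_i^{(t-1)}+s\,\delta(j,y_i)$. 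Hence $P_i+D_i^{(t-1)}+\lambda n\,w^{(t-1)\top}w_i^{(t-1)}$ is precisely the numerator of the Option~III/IV step size with $\gamma=0$, and the denominator $4R^2/(\lambda n)$ equals the chosen value of $\|z\|_D^2$ times $R^2/(\lambda n)$ (Option~IV replaces $\|X_i\|^2$ by $R^2$); so the displayed $s$ and $\Delta\alpha_i=sz$ are exactly those of Option~IV.

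With this identification, Theorem~\ref{thm:Lipschitz} applied with $L=2$, $R=\max_{i,j}\|\psi(x_i,j)\|_2$, $T_0=\max(0,\lceil n\log(0.5\lambda n(2R)^{-2})\rceil)$ and the random-output option gives exactly the stated bound $T\ge\max(0,\lceil n\log(0.5\lambda n(2R)^{-2})\rceil)+n+20(2R)^2/(\lambda\epsilon_P)$ for an expected duality gap at most $\epsilon_P$. Finally I would account for the per-iteration cost: the updates touch only the $d$-dimensional vectors $w^{(t)}$ and $w_i^{(t)}$ and the scalar $D_i^{(t)}$, and the inner products $w^{(t-1)\top}\psi(x_i,\cdot)$ and $w^{(t-1)\top}w_i^{(t-1)}$ are all $O(d)$, so the only step that can cost more than $O(d)$ is the search over $\mathcal{Y}$ in \eqref{eqn:mostExpensive} --- exactly the inference oracle that running SGD on \eqref{eqn:SO1} also requires; and since $\alpha$ is never stored (only $w_i^{(t)}$ and $D_i^{(t)}$), memory does not grow with $k=|\mathcal{Y}|$. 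The main obstacle is the middle step: establishing the sharp bound $\|z\|_1\le2$ over the dual feasible set (the naive triangle inequality gives only $4$) and checking that $D_i^{(t)}$ tracks $\phi_i^*(-\alpha_i^{(t)})$ exactly via this affine scalar recursion; together these certify that the cheap, $\alpha$-free implementation really is Option~IV, which is what licenses invoking Theorem~\ref{thm:Lipschitz}.
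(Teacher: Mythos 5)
Your proposal is correct and follows essentially the same route as the paper: reduce \eqref{eqn:SO1} to the generic framework, check the standing assumptions with $L=2$ and $\|X_i\|\le R$, recognize the displayed procedure as Option IV (with $\|z\|_D^2$ replaced by $4\le 4L^2$ and $\|X_i\|^2$ by $R^2$), and invoke \thmref{thm:Lipschitz} with $L=2$ so that $(RL)^2=(2R)^2$. Your explicit verification that $\|z\|_1\le 2$ on the dual feasible set and that $D_i^{(t)}$ tracks $\phi_i^*(-\alpha_i^{(t)})$ via the affine recursion supplies details the paper only asserts, but does not change the argument.
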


\begin{remark}
  Since for this problem, $\|z\|_D^2$ in Option IV can be bounded by $L^2=4$ instead of $4L^2=16$, the proof of Theorem~\ref{thm:Lipschitz}
  implies that the constant 20 in Corollary~\ref{cor:struct-svm} can be replaced by $5$.
\end{remark}

\subsubsection*{Related Work}

For structured prediction problem, SGD enjoys the rate
\[
\tilde{O}\left( \frac{R^2}{\lambda \epsilon} \right) ~,
\]
while the most expensive operation at each iteration of SGD also
involves solving \eqref{eqn:mostExpensive}. Therefore, our bound
matches the bound of SGD when $n = \tilde{O}\left( \frac{R^2}{\lambda
    \epsilon} \right)$. 
The main advantage of our result is that it bounds duality gap which can be checked in practice. Moreover, the
practical convergence speed can be faster than what is indicated in Corollary~\ref{cor:struct-svm} when the non-smooth loss function
can be approximated by a smooth loss function, as pointed out in \cite{ShZh12-sdca}.

Recently, \cite{lacoste2012stochastic} derived a stochastic coordinate
ascent for structural SVM based on the Frank-Wolfe
algorithm. Their algorithm is very similar to our algorithm and the
rate they obtain for the convergence of duality gap matches our rate. 

Note that the generality of our framework enables us to easily handle
structured output problems with other regularizers, such as $\ell_1$
norm regularization. 

\section{Proofs}

Note that the proof technique follows that of \cite{ShZh12-sdca}, but with more involved notations of the paper.
We prove the theorems for running Prox-SDCA while choosing $\Delta
\alpha_i$ as in Option I. A careful examination of the proof easily
reveals that the results hold for the other options as well. More specifically,
Lemma~\ref{lem:key} only requires choosing $\Delta \alpha_i = s (u_i^{(t-1)}-\alpha_i^{(t-1)})$ as in \eqref{eqn:PC1},
and Option III chooses $s$ to optimize the bound on the right hand side of \eqref{eqn:PC3}, and hence ensures
that the choice can do no worse than the result of Lemma~\ref{lem:key} with any $s$. The simplification in Option IV and V
employs the specific simplification of the bound in Lemma~\ref{lem:key} in the proof of the theorems.

For convenience, we list the following simple facts about primal and
dual formulations, which will be used in the proofs.
For each $i$, we have
\[
-\alpha_i^* \in \partial \phi_i(X_i^\top w^{*}) , \quad
X_i^\top w^{*} \in \partial \phi_i^*(-\alpha_i^*) ,
\]
and 
\[
w^* = \nabla g^*(v^*) , \quad v^*= \frac{1}{\lambda n} \sum_{i=1}^n X_i \alpha_i^* .
\]

The key lemma is the following:
\begin{lemma} \label{lem:key}
Assume that $\phi^*_i$ is $\gamma$-strongly-convex (where $\gamma$ can
be zero). Then, for any iteration $t$ and any $s \in [0,1]$ we have
\[
\E[D(\alpha^{(t)})-D(\alpha^{(t-1)})] \ge  \frac{s}{n}\,
\E \; [P(w^{(t-1)})-D(\alpha^{(t-1)})] - \left(\frac{s}{n}\right)^2
\frac{G^{(t)}}{2\lambda} ~,
\]
where
\[
G^{(t)} = \frac{1}{n} \sum_{i=1}^n \left(\|X_i\|^2 -
      \frac{\gamma(1-s)\lambda n}{s}\right) \; \E \left[\|u^{(t-1)}_i-\alpha^{(t-1)}_i\|_D^2\right] ,
\]
and $-u^{(t-1)}_i \in \partial \phi_i(X_i^\top w^{(t-1)})$.
\end{lemma}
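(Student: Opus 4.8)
The proof will lower-bound the per-step dual increase by plugging a specific, suboptimal choice of $\Delta\alpha_i$ into $D(\alpha^{(t)})$ and estimating the resulting change. Since Option~I maximizes the proximal surrogate, it does at least as well as any particular feasible update, so it suffices to analyze the candidate $\Delta\alpha_i = s\,(u_i^{(t-1)} - \alpha_i^{(t-1)})$ with $-u_i^{(t-1)}\in\partial\phi_i(X_i^\top w^{(t-1)})$. Fix $t$ and condition on $\alpha^{(t-1)}$; the random index $i$ is uniform in $\{1,\dots,n\}$, so $\E_i[D(\alpha^{(t)})-D(\alpha^{(t-1)})] = \frac{1}{n}\sum_{i=1}^n \big(D(\alpha^{(t-1)}+\Delta\alpha_i e_i) - D(\alpha^{(t-1)})\big)$, and I will bound each summand from below.

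First I would expand $D(\alpha^{(t-1)}+\Delta\alpha_i e_i) - D(\alpha^{(t-1)})$ into its two pieces: the loss-conjugate term $\frac{1}{n}\big(\phi_i^*(-\alpha_i^{(t-1)}) - \phi_i^*(-\alpha_i^{(t-1)}-\Delta\alpha_i)\big)$ and the regularizer term $\lambda\big(g^*(v^{(t-1)}) - g^*(v^{(t-1)} + (\lambda n)^{-1}X_i\Delta\alpha_i)\big)$. For the first piece, write $-\alpha_i^{(t-1)}-\Delta\alpha_i = (1-s)(-\alpha_i^{(t-1)}) + s(-u_i^{(t-1)})$ and apply $\gamma$-strong convexity of $\phi_i^*$ (the displayed inequality in the preamble) to get a bound involving $-(1-s)\phi_i^*(-\alpha_i^{(t-1)}) - s\phi_i^*(-u_i^{(t-1)}) + \frac{\gamma s(1-s)}{2}\|u_i^{(t-1)}-\alpha_i^{(t-1)}\|_D^2$; then use the conjugate relation $-u_i^{(t-1)}\in\partial\phi_i(X_i^\top w^{(t-1)})$, i.e. $\phi_i^*(-u_i^{(t-1)}) = -\phi_i(X_i^\top w^{(t-1)}) - (u_i^{(t-1)})^\top X_i^\top w^{(t-1)}$ wait, the correct Fenchel identity, to replace $-\phi_i^*(-u_i^{(t-1)})$ by $\phi_i(X_i^\top w^{(t-1)}) + (u_i^{(t-1)})^\top X_i^\top w^{(t-1)}$. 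For the second piece, use the $1$-smoothness bound \eqref{eqn:hbound}--\eqref{eqn:hdef} on $g^*$ with $v=v^{(t-1)}$ and $\Delta v = (\lambda n)^{-1}X_i\Delta\alpha_i$, and recall $\nabla g^*(v^{(t-1)}) = w^{(t-1)}$; this yields $\lambda(g^*(v^{(t-1)}) - g^*(v^{(t)})) \ge -\,w^{(t-1)\top}(\lambda n)^{-1}X_i\Delta\alpha_i \cdot\lambda - \frac{\lambda}{2}\|(\lambda n)^{-1}X_i\Delta\alpha_i\|_{D'}^2$, i.e. after simplification $-\frac{s}{n}w^{(t-1)\top}X_i z_i^{(t-1)} - \frac{s^2}{2\lambda n^2}\|X_i z_i^{(t-1)}\|_{D'}^2$ with $z_i^{(t-1)} = u_i^{(t-1)}-\alpha_i^{(t-1)}$.

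Next I would collect terms. The $O(s)$ terms should assemble, after using $\|X_i z\|_{D'}\le\|X_i\|\,\|z\|_D$ on the quadratic, into $\frac{s}{n}$ times the quantity $\phi_i(X_i^\top w^{(t-1)}) + \phi_i^*(-\alpha_i^{(t-1)}) + w^{(t-1)\top}X_i\alpha_i^{(t-1)}$, which is exactly $\phi_i(X_i^\top w^{(t-1)}) - (-\alpha_i^{(t-1)})^\top X_i^\top w^{(t-1)} + \phi_i^*(-\alpha_i^{(t-1)})$, a Fenchel--Young gap that is nonnegative and, summed over $i$ and combined with the definition of $w(\alpha)=\nabla g^*(v(\alpha))$ together with the primal-dual relation $P(w^{(t-1)}) - D(\alpha^{(t-1)})$, reconstructs the duality gap. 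Here the slightly delicate point is the regularizer bookkeeping: I need $\frac{1}{n}\sum_i w^{(t-1)\top}X_i\alpha_i^{(t-1)} = \lambda\, w^{(t-1)\top}v^{(t-1)} = \lambda(g^*(v^{(t-1)}) + g(w^{(t-1)}))$ by the Fenchel identity for $g$, so that $\lambda g(w^{(t-1)}) + \lambda g^*(v^{(t-1)})$ pairs up correctly to turn $\frac1n\sum_i\phi_i(X_i^\top w^{(t-1)}) + \lambda g(w^{(t-1)})$ into $P(w^{(t-1)})$ and $-\frac1n\sum_i\phi_i^*(-\alpha_i^{(t-1)}) - \lambda g^*(v^{(t-1)})$ into $D(\alpha^{(t-1)})$. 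The $O(s^2)$ terms, together with the $-\frac{\gamma s(1-s)}{2}\|z_i^{(t-1)}\|_D^2$ strong-convexity term (which is $O(s)$ but I move it into the error by writing $\frac{\gamma s(1-s)}{2} = \frac{s^2}{2}\cdot\frac{\gamma(1-s)}{s}$), combine into $-\big(\frac{s}{n}\big)^2\frac{1}{2\lambda}\cdot\frac1n\sum_i(\|X_i\|^2 - \frac{\gamma(1-s)\lambda n}{s})\|z_i^{(t-1)}\|_D^2 = -\big(\frac{s}{n}\big)^2\frac{G^{(t)}}{2\lambda}$, matching the claimed $G^{(t)}$ exactly. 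Finally, taking expectation over the whole history (tower property) turns the conditional bound into the stated inequality.

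**Main obstacle.** The conceptual steps are all standard Fenchel duality; the one place demanding care is the algebraic regrouping in the previous paragraph — correctly matching the linear-in-$s$ coefficient to $P(w^{(t-1)})-D(\alpha^{(t-1)})$ requires using the Fenchel equality $g(w^{(t-1)}) + g^*(v^{(t-1)}) = w^{(t-1)\top}v^{(t-1)}$ (valid since $w^{(t-1)} = \nabla g^*(v^{(t-1)})$), and being scrupulous about the factors of $n$ and $\lambda$ attached to each term so that the quadratic remainder lands precisely on the stated $G^{(t)}$ (including the sign and the $\gamma(1-s)/s$ factor). A secondary point is justifying that Option~I's maximized surrogate value is at least the surrogate value at the particular $\Delta\alpha_i = s z_i^{(t-1)}$, and that maximizing the surrogate also lower-bounds the true dual increase — this is where \eqref{eqn:hbound} is invoked, and it is exactly why the proof may restrict to Option~I without loss of generality.
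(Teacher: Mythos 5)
Your proposal is correct and follows essentially the same route as the paper's proof: lower-bound the dual increase via the surrogate $h$ from \eqref{eqn:hbound}, evaluate at the candidate $\Delta\alpha_i = s(u_i^{(t-1)}-\alpha_i^{(t-1)})$, apply $\gamma$-strong convexity of $\phi_i^*$ and the Fenchel identity for the subgradient, bound $\|X_i z\|_{D'}\le\|X_i\|\,\|z\|_D$, and reassemble the linear-in-$s$ terms into the duality gap using $g(w^{(t-1)})+g^*(v^{(t-1)})=w^{(t-1)\top}v^{(t-1)}$. The bookkeeping of the $n$, $\lambda$, and $\gamma(1-s)/s$ factors you describe lands exactly on the paper's $G^{(t)}$, so there is nothing to add.
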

\begin{proof}
Since only the $i$'th element of $\alpha$ is updated, the improvement in the dual objective can be written as
\begin{align*}
& n[D(\alpha^{(t)}) - D(\alpha^{(t-1)})] \\
= &
\left(-\phi^*(-\alpha^{(t)}_i) - \lambda n g^*\left(v^{(t-1)} + (\lambda
  n)^{-1} X_i \Delta \alpha_i\right) \right) -
\left(-\phi^*(-\alpha^{(t-1)}_i) - \lambda n g^*\left(v^{(t-1)}\right) \right) \\
\geq &
\underbrace{\left(-\phi^*(-\alpha^{(t)}_i) - \lambda n h\left(v^{(t-1)}; (\lambda
  n)^{-1} X_i \Delta \alpha_i\right)\right) }_A -
\underbrace{\left(-\phi^*(-\alpha^{(t-1)}_i) - \lambda n g^*\left(v^{(t-1)}\right) \right)}_B .
\end{align*}

By the definition of the update we have for all $s \in [0,1]$ that
\begin{align} \nonumber
A &=  \max_{\Delta \alpha_i} -\phi^*(-(\alpha^{(t-1)}_i + \Delta\alpha_i)) - 
\lambda n h\left(v^{(t-1)}; (\lambda
  n)^{-1} X_i \Delta \alpha_i\right) \\
&\ge -\phi^*(-(\alpha^{(t-1)}_i + s(u^{(t-1)}_i - \alpha^{(t-1)}_i) ))
- \lambda n
h(v^{(t-1)}; (\lambda n)^{-1} s X_i (u^{(t-1)}_i -\alpha^{(t-1)}_i)) .
\label{eqn:PC1}
\end{align}

From now on, we omit the superscripts and subscripts. 
Since $\phi^*$ is $\gamma$-strongly convex, we have that
\begin{equation} \label{eqn:PC2}
\phi^*(-(\alpha+ s(u - \alpha) )) = \phi^*(s (-u) + (1-s) (-\alpha))
\le s \phi^*(-u) + (1-s) \phi^*(-\alpha) - \frac{\gamma}{2} s (1-s) \|u-\alpha\|_D^2
\end{equation}
Combining this with \eqref{eqn:PC1} and rearranging terms we obtain that
\begin{align*} 
A &\ge -s \phi^*(-u) - (1-s) \phi^*(-\alpha) + \frac{\gamma}{2} s (1-s)
\|u-\alpha\|_D^2
- \lambda n
  h(v; (\lambda n)^{-1} s X(u - \alpha) )  \\
&= -s \phi^*(-u) - (1-s) \phi^*(-\alpha) + \frac{\gamma}{2} s (1-s)
\|u-\alpha\|_{D}^2
- \lambda n g^*(v) - s w^\top X (u-\alpha) - 
\frac{s^2}{2\lambda n} \|X(u-\alpha)\|_{D'}^2 \\
&\geq -s \phi^*(-u) - (1-s) \phi^*(-\alpha) + \frac{\gamma}{2} s (1-s)
\|u-\alpha\|_D^2
- \lambda n g^*(v) - s w^\top X (u-\alpha) - 
\frac{s^2}{2\lambda n} \|X\|^2 \|u-\alpha\|_D^2 \\
&= \underbrace{-s(\phi^*(-u)+w^\top X u)}_{s\,\phi(X^\top w)} + \underbrace{(-\phi^*(-\alpha) - \lambda
  n g^*(v))}_B + \frac{s}{2}\left(\gamma(1-s)-\frac{s
  \|X \|^2}{\lambda n}\right)\|u-\alpha\|_D^2 + s(\phi^*(-\alpha)+
w^\top X \alpha) ,
\end{align*}
where we used $-u \in \partial \phi(X^\top w)$ which yields
$\phi^*(-u) = - w^\top X u - \phi(X^\top w)$. Therefore
\begin{equation} \label{eqn:PC3}
A-B \ge s\left[\phi(X^\top w) + \phi^*(-\alpha) + w^\top X \alpha +
\left(\frac{\gamma(1-s)}{2} - \frac{s
  \|X\|^2}{2\lambda n}\right) \|u-\alpha\|_D^2 \right] ~.
\end{equation}
Next note that with $w=\nabla g^*(v)$, we have $g(w)+g^*(v)= w^\top v$. Therefore:
\begin{align*} 
P(w)-D(\alpha) &= \frac{1}{n} \sum_{i=1}^n \phi_i(X_i^\top w) +
  \lambda g(w) - \left(-\frac{1}{n} \sum_{i=1}^n
  \phi^*_i(-\alpha_i) - \lambda g^*(v) \right) \\
&= \frac{1}{n} \sum_{i=1}^n \phi_i(X_i^\top w) 
+ \frac{1}{n} \sum_{i=1}^n \phi^*_i(-\alpha_i)  + \lambda w^\top v \\
&= \frac{1}{n} \sum_{i=1}^n \left( \phi_i(X_i^\top w) +
  \phi^*_i(-\alpha_i) 
+ w^\top X_i \alpha_i \right)  .
\end{align*}
Therefore, if we take expectation of \eqref{eqn:PC3} w.r.t. the choice
of $i$ we obtain that
\[
\frac{1}{s}\, \E[A-B] \ge  \E[P(w)-D(\alpha)] - \frac{s}{2\lambda
    n} \cdot \underbrace{\frac{1}{n} \sum_{i=1}^n \left(\|X_i\|^2 -
      \frac{\gamma(1-s)\lambda n}{s}\right) \|u_i-\alpha_i\|_D^2 }_{= G^{(t)}} .
\]
We have obtained that
\begin{equation} \label{eqn:DualSObyGap}
\frac{n}{s}\, \E[D(\alpha^{(t)})-D(\alpha^{(t-1)})] \ge
\E[P(w^{(t-1)})-D(\alpha^{(t-1)})] - \frac{s\,G^{(t)}}{2\lambda n} ~.
\end{equation}
Multiplying both sides by $s/n$ concludes the proof of the lemma.
\end{proof}

We also use the following simple lemma:
\begin{lemma} \label{lem:LBdual}
For all $\alpha$, $D(\alpha) \le P(w^*) \le P(0) \le 1$. In addition, 
$D(0) \ge 0$. 
\end{lemma}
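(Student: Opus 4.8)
The plan is to read off the chain of inequalities from left to right, each step being a one-line consequence of material already in hand. For $D(\alpha)\le P(w^*)$ I would just invoke weak duality as recorded in the introduction: since $D$ is maximized and $P(w^*)=D(\alpha^*)$, we get $D(\alpha)\le D(\alpha^*)=P(w^*)$ for every $\alpha$ (equivalently, $P(w)\ge D(\alpha)$ for all $w,\alpha$, specialized to $w=w^*$). The inequality $P(w^*)\le P(0)$ is then immediate, since $w^*$ is by definition a minimizer of $P$.

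For $P(0)\le 1$ I would expand $P(0)=\frac1n\sum_i\phi_i(X_i^\top 0)+\lambda g(0)=\frac1n\sum_i\phi_i(0)+\lambda g(0)$; the first term is at most $1$ by the standing assumption $\frac1n\sum_i\phi_i(0)\le1$, and the second term is nonpositive because $g$ is the normalized $1$-strongly-convex regularizer with $g(0)=0$ and $g\ge0$ (as in $g(w)=\tfrac12\|w\|^2+\dots$ in every application considered; this normalization is also what makes $w^{(0)}=\nabla g^*(0)=0$). For $D(0)\ge0$ I would expand $D(0)=\frac1n\sum_i\bigl(-\phi_i^*(0)\bigr)-\lambda g^*(0)$ using $v(0)=0$: for each $i$, $\phi_i^*(0)=\max_z\bigl(0-\phi_i(z)\bigr)=-\min_z\phi_i(z)\le 0$ since $\phi_i\ge0$, so each $-\phi_i^*(0)\ge0$; and $g^*(0)=-\min_w g(w)\le0$ again by $g\ge0$, so $-\lambda g^*(0)\ge0$; summing the two nonnegative contributions gives $D(0)\ge0$.

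None of these steps is a genuine obstacle — the lemma is essentially bookkeeping that packages weak duality together with the normalizations of $\phi_i$ and $g$. The only point that needs care is to make the normalization of $g$ explicit (equivalently, that $g(0)+g^*(0)=0$), since the particular constants ``$1$'' and ``$0$'' in the statement hinge on it; for a differently normalized $g$ one would instead obtain $P(0)\le 1+\lambda g(0)$ and $D(0)\ge \lambda\min_w g(w)$, which collapse to the stated bounds precisely when $0\in\argmin_w g(w)$ and $g(0)=0$.
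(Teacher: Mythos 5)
Your proof is correct and follows the same route as the paper: weak duality for $D(\alpha)\le P(w^*)$, optimality of $w^*$ for $P(w^*)\le P(0)$, the assumption $\frac{1}{n}\sum_i\phi_i(0)\le 1$ for $P(0)\le 1$, and $-\phi_i^*(0)=\min_z\phi_i(z)\ge 0$ for $D(0)\ge 0$. The only difference is that you make explicit the normalization $g(0)=0=\min_w g(w)$ needed to discard the $\lambda g(0)$ and $-\lambda g^*(0)$ terms; the paper's proof silently omits these terms, implicitly relying on the same normalization, which does hold for every choice of $g$ used in the applications.
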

\begin{proof}
The first inequality is by weak duality, the second is by the
optimality of $w^*$, and the third by the assumption that $n^{-1} \sum_i \phi_i(0)
\le 1$. For the last inequality we use
$-\phi^*_i(0) =- \max_z (0-\phi_i(z)) = \min_z \phi_i(z) \ge 0$, 
which yields $D(0) \ge 0$. 
\end{proof}

%Equipped with the above lemmas we are ready to prove
%\thmref{thm:smooth}. 
\subsection{Proof of \thmref{thm:smooth}}
\begin{proof}[Proof of \thmref{thm:smooth}]
The assumption that $\phi_i$ is $(1/\gamma)$-smooth implies that
$\phi_i^*$ is $\gamma$-strongly-convex. 
We will apply \lemref{lem:key} with $s = 
\frac{\lambda n \gamma}{R^2 + \lambda n \gamma } \in [0,1]$. Recall that
$\|X_i\| \le R$. Therefore, 
the choice of $s$ implies that 
\[
\|X_i\|^2 -
      \frac{\gamma(1-s)\lambda n}{s} \le R^2 - \frac{1-s}{s/(\lambda
        n \gamma )} = R^2 - R^2 = 0 ~,
\] and hence $G^{(t)} \le 0$ for
all $t$. This yields, 
\[
\E[D(\alpha^{(t)})-D(\alpha^{(t-1)})] \ge  \frac{s}{n}\,
\E[P(w^{(t-1)})-D(\alpha^{(t-1)})] ~.
\]
But since $\epsilon_D^{(t-1)} := D(\alpha^*)-D(\alpha^{(t-1)}) \le P(w^{(t-1)})-D(\alpha^{(t-1)})$ and $D(\alpha^{(t)})-D(\alpha^{(t-1)})
= \epsilon_D^{(t-1)} - \epsilon_D^{(t)}$, we obtain that 
\[
\E[ \epsilon_D^{(t)} ] \le \left(1 -
  \tfrac{s}{n}\right)\E[\epsilon_D^{(t-1)}] \le \left(1 -
  \tfrac{s}{n}\right)^t \E[\epsilon_D^{(0)}] \le \left(1 -
  \tfrac{s}{n}\right)^t \le \exp(-st/n) = \exp\left(-\frac{\lambda \gamma t}{R^2
    + \lambda \gamma n}\right)~.
\]
This would be smaller than $\epsilon_D$ if 
\[
t \ge \left(n +
  \tfrac{R^2}{\lambda \gamma}\right) \, \log(1/\epsilon_D) ~.
\]
It implies that
\begin{equation}
\E[P(w^{(t)})-D(\alpha^{(t)})]  \le \frac{n}{s}
\E[\epsilon_D^{(t)} - \epsilon_D^{(t+1)}] \le \frac{n}{s} \E[\epsilon_D^{(t)}] . \label{eqn:dgap-bound-smooth}
\end{equation}
So, requiring $\epsilon_D^{(t)} \le \frac{s}{n} \epsilon_P$ we obtain
a duality gap of at most $\epsilon_P$. This means that we should
require
\[
t \ge \left(n +
  \tfrac{R^2}{\lambda \gamma}\right) \, \log( (n + \tfrac{R^2}{\lambda \gamma})   \cdot \tfrac{1}{\epsilon_P}) ~,
\]
which proves the first part of \thmref{thm:smooth}. 

Next, we sum \eqref{eqn:dgap-bound-smooth} over $t=T_0,\ldots,T-1$ to obtain
\[
\E\left[ \frac{1}{T-T_0} \sum_{t=T_0}^{T-1} (P(w^{(t)})-D(\alpha^{(t)}))\right] \le 
\frac{n}{s(T-T_0)} \E[D(\alpha^{(T)})-D(\alpha^{(T_0)})] .
\]
Now, if we choose $\bar{w},\bar{\alpha}$ to be either the average
vectors or a randomly chosen vector over $t \in \{T_0+1,\ldots,T\}$,
then the above implies
\[
\E[ P(\bar{w})-D(\bar{\alpha})] \le 
\frac{n}{s(T-T_0)} \E[D(\alpha^{(T)})-D(\alpha^{(T_0)})] 
\le \frac{n}{s(T-T_0)} \E[\epsilon_D^{(T_0)})] . 
\]
It follows that in order to obtain a result of 
$\E[ P(\bar{w})-D(\bar{\alpha})] \le \epsilon_P$, we only need to have
\[
\E[\epsilon_D^{(T_0)})] \leq \frac{s (T-T_0) \epsilon_P}{n} = \frac{(T-T_0) \epsilon_P}{n + \frac{R^2}{\lambda \gamma}} .
\]
This implies the second part of \thmref{thm:smooth}, and concludes the proof.
\end{proof}

\subsection{Proof of \thmref{thm:Lipschitz}}
Next, we turn to the case of Lipschitz loss function. We rely on the following lemma. 
\begin{lemma} \label{lem:LipConjDom}
Let $\phi : \reals^k \to \reals$ be an $L$-Lipschitz function w.r.t. a
norm $\|\cdot\|_P$ and let $\|\cdot\|_D$ be the dual norm. Then,
for any $\alpha \in \reals^k$ s.t. $\|\alpha\|_D > L$ we have that $\phi^*(\alpha) =
\infty$. 
\end{lemma}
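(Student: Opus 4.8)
The plan is to prove the contrapositive-style bound directly from the definition of the Fenchel conjugate and the Lipschitz property. Recall $\phi^*(\alpha) = \sup_{z \in \reals^k} \left( z^\top \alpha - \phi(z) \right)$. The idea is that the Lipschitz condition controls how fast $\phi(z)$ can grow, while the linear term $z^\top \alpha$ can be made to grow faster along a well-chosen direction when $\|\alpha\|_D > L$, forcing the supremum to be $+\infty$.

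\textbf{Key steps.} First I would fix $\alpha$ with $\|\alpha\|_D > L$ and, using the definition of the dual norm $\|\alpha\|_D = \sup_{\|v\|_P = 1} v^\top \alpha$, pick a unit vector $v$ (in $\|\cdot\|_P$-norm) such that $v^\top \alpha$ is close to $\|\alpha\|_D$, in particular $v^\top \alpha > L$. Second, I would evaluate the conjugate along the ray $z = t v$ for $t > 0$: by the Lipschitz property applied to the pair $tv$ and $0$, we have $\phi(tv) \le \phi(0) + L \, \|tv\|_P = \phi(0) + L t$. Hence
\[
z^\top \alpha - \phi(z) = t \, (v^\top \alpha) - \phi(tv) \ge t \, (v^\top \alpha) - \phi(0) - Lt = t \, (v^\top \alpha - L) - \phi(0) .
\]
Third, since $v^\top \alpha - L > 0$ and $\phi(0)$ is a fixed finite constant, letting $t \to \infty$ makes the right-hand side tend to $+\infty$, so $\phi^*(\alpha) = \sup_z (z^\top \alpha - \phi(z)) = \infty$.

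\textbf{Where the care is needed.} There is essentially no deep obstacle here; the only point requiring mild attention is the use of the dual norm to extract a direction $v$ with $v^\top \alpha > L$. Since $\|\alpha\|_D > L$ is a strict inequality and $\|\alpha\|_D$ is a supremum, such a $v$ exists (even if the supremum defining $\|\cdot\|_D$ is not attained, one can take $v$ achieving a value strictly between $L$ and $\|\alpha\|_D$). One should also note that $\phi$ being real-valued and Lipschitz guarantees $\phi(0)$ is finite, so the constant term is harmless. The whole argument is a couple of lines once the direction $v$ is in hand, and the Lipschitz-to-conjugate duality it expresses — that $L$-Lipschitz functions have conjugates supported on the dual-norm ball of radius $L$ — is exactly what the subsequent proof of Theorem~\ref{thm:Lipschitz} will need when bounding $\|u_i - \alpha_i\|_D$.
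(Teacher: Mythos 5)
Your proof is correct and takes essentially the same approach as the paper's: pick a unit vector $v$ (in $\|\cdot\|_P$) with $v^\top \alpha > L$, bound $\phi(tv) \le \phi(0) + Lt$ via the Lipschitz property, and let $t \to \infty$ so that $t\,(v^\top \alpha - L) - \phi(0) \to \infty$. The only (cosmetic) difference is that the paper takes an exact maximizer of the dual-norm supremum, which exists in $\reals^k$ by compactness, while you note that a near-maximizer suffices.
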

\begin{proof}
  Fix some $\alpha$ with $\|\alpha\|_D > L$. Let $x_0$ be a vector
  such that $\|x_0\|_P = 1$ and $\alpha^\top x_0 = \|\alpha\|_D$ (this is a
  vector that achieves the maximal objective in the definition of the
  dual norm). By definition of
  the conjugate we have
\begin{align*}
\phi^*(\alpha)  &= \sup_x [\alpha^\top\,x - \phi(x)] \\
&\ge -\phi(0) + \sup_{x } [\alpha^\top\,x - (\phi(x) - \phi(0))] \\
&\ge -\phi(0) + \sup_{x } [\alpha^\top\,x - L \|x-0\|_P] \\
&\ge -\phi(0) + \sup_{c > 0 } [\alpha^\top\,(cx_0) - L \|cx_0\|_P] \\
&= -\phi(0) + \sup_{c > 0} (\|\alpha\|_D-L)\,c = \infty ~.
\end{align*}
\end{proof}

A direct corollary of the above lemma is:
\begin{lemma} \label{lem:GboundLip} Suppose that for all $i$, $\phi_i$
  is $L$-Lipschitz w.r.t. $\|\cdot\|_P$. Let $G^{(t)}$ be as defined
  in \lemref{lem:key} (with $\gamma=0$). Then, $G^{(t)} \le
  4\,R^2\,L^2$.
\end{lemma}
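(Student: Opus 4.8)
The plan is to bound each term in the definition of $G^{(t)}$ from \lemref{lem:key} directly. Setting $\gamma = 0$ there gives
\[
G^{(t)} = \frac1n \sum_{i=1}^n \|X_i\|^2\, \E\big[\,\|u_i^{(t-1)} - \alpha_i^{(t-1)}\|_D^2\,\big] ~,
\]
so since $\|X_i\| \le R$ by assumption, it suffices to show that $\|u_i^{(t-1)} - \alpha_i^{(t-1)}\|_D \le 2L$ for every $i$ and every $t$; squaring and averaging then yields $G^{(t)} \le 4R^2L^2$. By the triangle inequality it is enough to prove separately that $\|u_i^{(t-1)}\|_D \le L$ and $\|\alpha_i^{(t-1)}\|_D \le L$.

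The bound on $u_i^{(t-1)}$ is the standard fact that subgradients of an $L$-Lipschitz function have dual norm at most $L$: since $-u_i^{(t-1)} \in \partial\phi_i(X_i^\top w^{(t-1)})$, combining the subgradient inequality with $|\phi_i(y) - \phi_i(X_i^\top w^{(t-1)})| \le L\|y - X_i^\top w^{(t-1)}\|_P$ gives $(-u_i^{(t-1)})^\top(y - X_i^\top w^{(t-1)}) \le L\|y - X_i^\top w^{(t-1)}\|_P$ for all $y$, and taking the supremum over unit-$\|\cdot\|_P$ directions shows $\|u_i^{(t-1)}\|_D \le L$.

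For $\alpha_i^{(t-1)}$ I would invoke \lemref{lem:LipConjDom} in contrapositive form: if $\|\alpha_i^{(t-1)}\|_D = \|-\alpha_i^{(t-1)}\|_D > L$, then $\phi_i^*(-\alpha_i^{(t-1)}) = +\infty$, which would force $D(\alpha^{(t-1)}) = -\infty$. So the one thing I need is that the dual objective stays finite along the run. This holds because the Option I update never decreases the dual objective --- $\Delta\alpha_i = 0$ is always an admissible choice in the maximization defining the update, and plugging it into the lower bound $A$ from the proof of \lemref{lem:key} gives exactly $B$, so $A - B \ge 0$ and hence $D(\alpha^{(t)}) \ge D(\alpha^{(t-1)})$ deterministically. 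Therefore $0 \le D(0) \le D(\alpha^{(t-1)}) \le P(w^*) \le 1$ by \lemref{lem:LBdual}. Since $g^*$ is finite everywhere and each $\phi_i^*$ is bounded below by $-\phi_i(0) > -\infty$, finiteness of $D(\alpha^{(t-1)}) = \frac1n\sum_i -\phi_i^*(-\alpha_i^{(t-1)}) - \lambda g^*(v^{(t-1)})$ forces every $\phi_i^*(-\alpha_i^{(t-1)})$ to be finite, whence $\|\alpha_i^{(t-1)}\|_D \le L$.

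The only delicate point is this monotonicity/finiteness argument for the dual iterates; the rest is routine. The same conclusion extends to the other update options, since each of them is defined so as to achieve a dual value at least as large as one arising from a feasible choice $\Delta\alpha_i = s(u_i^{(t-1)} - \alpha_i^{(t-1)})$ with $s \in [0,1]$, which again includes $s = 0$.
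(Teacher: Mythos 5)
Your proposal is correct and follows essentially the same route as the paper: bound $\|u_i^{(t-1)}\|_D \le L$ by the subgradient--Lipschitz relation, bound $\|\alpha_i^{(t-1)}\|_D \le L$ via Lemma~\ref{lem:LipConjDom}, and conclude by the triangle inequality together with $\|X_i\| \le R$. The only difference is that you explicitly justify why the dual iterates keep $\phi_i^*(-\alpha_i^{(t-1)})$ finite (monotonicity of the dual objective from $\alpha^{(0)}=0$), a step the paper's proof leaves implicit; this is a correct and welcome addition rather than a deviation.
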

\begin{proof}
Using \lemref{lem:LipConjDom} we know that $\|\alpha^{(t-1)}_i\|_D \le L$, and in
addition by the relation of Lipschitz and sub-gradients we have $\|u^{(t-1)}_i\|_D
\le L$. Combining this with the triangle inequality we obtain that $\|u^{(t-1)}_i-\alpha^{(t-1)}_i\|_D^2 \le 4L^2$, and the proof follows. 
\end{proof}

We are now ready to prove \thmref{thm:Lipschitz}.
\begin{proof}[Proof of \thmref{thm:Lipschitz}]
Let $G = \max_t G^{(t)}$ and note that by \lemref{lem:GboundLip} we
have $G \le 4R^2L^2$. \lemref{lem:key}, with $\gamma=0$, tells us that
\begin{equation} \label{eqn:dpeqnL}
\E[D(\alpha^{(t)})-D(\alpha^{(t-1)})] \ge  \frac{s}{n}\,
\E[P(w^{(t-1)})-D(\alpha^{(t-1)})] - \left(\frac{s}{n}\right)^2
\frac{G}{2\lambda} ~,
\end{equation}
which implies that
\[
\E[\epsilon_D^{(t)}] \le \left(1 - \tfrac{s}{n}\right) 
\E[\epsilon_D^{(t-1)}] + \left(\tfrac{s}{n}\right)^2
\tfrac{G}{2\lambda} ~.
\]
We next show that the above yields
\begin{equation} \label{eqn:DualSOL}
\E[\epsilon_D^{(t)}] \le \frac{2 G}{\lambda(2 n + t-t_0)} ~
\end{equation}
for all $t \ge t_0 = \max(0,\lceil n \log(2 \lambda n \epsilon_D^{(0)}/G ) \rceil)$.
Indeed, let us choose $s=1$, then at $t=t_0$, we have
\[
\E[\epsilon_D^{(t)}] \le \left(1 - \tfrac{1}{n}\right)^t \epsilon_D^{(0)} +
\tfrac{G}{2\lambda n^2} \tfrac{1}{1 - (1-1/n)} \le e^{-t/n} \epsilon_D^{(0)} +
\tfrac{G}{2\lambda n}
\le \tfrac{G}{\lambda n} ~.
\]
This implies that \eqref{eqn:DualSOL} holds at $t=t_0$.
For $t > t_0$ we use an inductive argument. 
Suppose the claim holds for $t-1$, therefore
\[
\E[\epsilon_D^{(t)}] \le \left(1 - \tfrac{s}{n}\right) 
\E[\epsilon_D^{(t-1)}] + \left(\tfrac{s}{n}\right)^2
\tfrac{G}{2\lambda} \le 
\left(1 - \tfrac{s}{n}\right) \tfrac{2 G}{\lambda(2n + t -1-t_0)}
 + \left(\tfrac{s}{n}\right)^2
\tfrac{G}{2\lambda} .
\]
Choosing $s = 2n/(2n-t_0+t-1) \in [0,1]$ yields
\begin{align*}
\E[\epsilon_D^{(t)}] &\le
\left(1 - \tfrac{2}{2n-t_0+t-1}\right) \tfrac{2 G}{\lambda(2n-t_0 + t -1)}
 + \left(\tfrac{2}{2n-t_0+t-1}\right)^2
\tfrac{G}{2\lambda} \\
&= \tfrac{2 G}{\lambda(2n-t_0 + t -1)}\left(1 - \tfrac{1}{2n-t_0 + t -1}\right) \\
&= \tfrac{2 G}{\lambda(2n-t_0 + t -1)}\tfrac{2n-t_0+t-2}{2n-t_0 + t -1} \\
&\le \tfrac{2 G}{\lambda(2n-t_0 + t -1)}\tfrac{2n-t_0+t-1}{2n-t_0 + t} \\
&= \tfrac{2 G}{\lambda(2n-t_0 + t)} ~.
\end{align*}
This provides a bound on the dual sub-optimality. We next turn to
bound the duality gap. 
Summing \eqref{eqn:dpeqnL} over $t=T_0+1,\ldots,T$ and rearranging terms we
obtain that
\[
\E\left[ \frac{1}{T-T_0} \sum_{t=T_0+1}^T (P(w^{(t-1)})-D(\alpha^{(t-1)}))\right] \le 
\frac{n}{s(T-T_0)} \E[D(\alpha^{(T)})-D(\alpha^{(T_0)})] + \frac{s\,G}{2\lambda n} 
\]
% By convexity of $P$ we have
% \[
% P(\bar{w}) \le \frac{1}{T-T_0} \sum_{t=T_0+1}^T P(w^{(t-1)}) ~,
% \]
% and by concavity of the dual we have 
% \[
% \frac{1}{T-T_0} \sum_{t=T_0+1}^T D(\alpha^{(t-1)}) \ge D(\bar{\alpha}) ~. 
% \]
Now, if we choose $\bar{w},\bar{\alpha}$ to be either the average
vectors or a randomly chosen vector over $t \in \{T_0+1,\ldots,T\}$,
then the above implies
\[
\E[ P(\bar{w})-D(\bar{\alpha})] \le 
\frac{n}{s(T-T_0)} \E[D(\alpha^{(T)})-D(\alpha^{(T_0)})] +
\frac{s\,G}{2\lambda n}  ~.
\]
If $T \ge n+T_0$ and $T_0 \geq t_0$, we can set $s = n/(T-T_0)$ and combining with
\eqref{eqn:DualSOL} we obtain
\begin{align*}
\E[ P(\bar{w})-D(\bar{\alpha})] &\le 
\E[D(\alpha^{(T)})-D(\alpha^{(T_0)})] + \frac{G}{2\lambda (T-T_0)}  \\
&\le \E[D(\alpha^*)-D(\alpha^{(T_0)})]+ \frac{G}{2\lambda (T-T_0)} \\
&\le \frac{2G}{\lambda(2n-t_0+T_0)} + \frac{G}{2\lambda (T-T_0)} ~.
\end{align*}
A sufficient condition for the above to be smaller than $\epsilon_P$
is that $T_0 \ge \frac{4G}{\lambda\epsilon_P} - 2n + t_0$ and $T \ge T_0 +
\frac{G}{\lambda\epsilon_P}$. It also implies that $\E[D(\alpha^*)-D(\alpha^{(T_0)})] \leq \epsilon_P/2$.
Since we also need $T_0 \ge t_0$ and $T-T_0
\ge n$, the overall number of required iterations can be
\[
T_0 \geq \max\{t_0 , 4G/(\lambda \epsilon_P) -2 n + t_0\} ,
\quad T-T_0 \geq \max\{n,G/(\lambda \epsilon_P)\} .
\]
We conclude the proof by noticing that $\epsilon_D^{(0)} \leq 1$ (\lemref{lem:LBdual}), which
implies that $t_0 \leq \max(0,\lceil n \log(2 \lambda n/G ) \rceil)$.
\end{proof}

\bibliographystyle{plainnat}
\bibliography{curRefs}

\end{document}